\newif\ifpreprint
\newif\iffinal
\newif\ifpaper
\def\mathunderline#1#2{\color{#1}\underbracket[1pt][1pt]{{\color{black}#2}}\color{black}}
\newtheorem{theorem}{Theorem}
\newtheorem{lemma}[theorem]{Lemma}
\definecolor{DarkBlue}{rgb}{0.22,0.22,1}
\definecolor{Red}{rgb}{0.9,0.0,0.1}
\definecolor{Black}{rgb}{0.0,0.0,0.0}
\definecolor{Green}{rgb}{0.0,0.9,0.1}
\newcommand\footnoteref[1]{\protected@xdef\@thefnmark{\ref{#1}}\@footnotemark}
\title{Fast Approximate Natural Gradient Descent in a Kronecker-factored Eigenbasis}
\author{
 Thomas George$^{*1}$, C\'{e}sar Laurent$^{*1}$, Xavier Bouthillier$^1$, Nicolas Ballas$^2$, Pascal Vincent$^{1,2,3}$\\
$^1$ Mila - Universit\'{e} de Montr\'{e}al; \hspace{0.4cm}
$^2$ Facebook AI Research; \hspace{0.4cm} $^3$ CIFAR; \hspace{0.4cm} $^*$ \emph{equal contribution}  \\
\texttt{\{thomas.george, cesar.laurent, xavier.bouthillier\}@umontreal.ca} \\
\texttt{\{ballasn, pascal\}@fb.com}
}
\begin{document}

\maketitle

\begin{abstract}

Optimization algorithms that leverage gradient covariance information, such as variants of natural gradient descent \citep{amari1998natural}, offer the prospect of yielding more effective descent directions. For models with many parameters, the covariance matrix they are based on becomes gigantic, making them inapplicable in their original form. This  has motivated research into both simple diagonal approximations and more sophisticated factored approximations such as KFAC \citep{heskes2000natural,martens2015optimizing,grosse2016kronecker}. In the present work we draw inspiration from both to propose a novel approximation that is provably better than KFAC and amendable to cheap partial updates. It consists in tracking a diagonal variance, not in parameter coordinates, but in a Kronecker-factored eigenbasis, in which the diagonal approximation is likely to be more effective. Experiments show improvements over KFAC in optimization speed for several deep network architectures.

\end{abstract}

\section{Introduction}

Deep networks have exhibited  state-of-the-art performance in many application areas, including image recognition \citep{he2016deep} and natural language processing \citep{gehring2017convolutional}. However top-performing systems often require days of training time and a large amount of computational power, so there is a need for efficient training methods.

Stochastic Gradient Descent (SGD) and its variants are the current workhorse for training neural networks.  Training consists in optimizing the network parameters $\theta$ (of size $n_\theta$) to minimize a regularized empirical risk $R\left(\theta\right)$,  through gradient descent. The negative loss gradient is approximated based on a small subset of training examples (a mini-batch). The loss functions of neural networks are highly
non-convex functions of the parameters, and the loss surface is known to have highly imbalanced curvature which  limits the efficiency of $1^\mathrm{st}$ order optimization methods such as SGD. 

Methods that employ $2^\mathrm{nd}$ order information have the potential to speed up $1^\mathrm{st}$ order gradient descent by correcting for imbalanced curvature. The parameters are then updated as: $\theta \leftarrow\theta - \eta G^{-1} \nabla_{\theta} R\left(\theta\right)$, where $\eta$ is a positive learning-rate and $G$ is a preconditioning matrix capturing the local curvature or related information such as the Hessian matrix in Newton's method or the Fisher Information Matrix in Natural Gradient \citep{amari1998natural}.
Matrix $G$ has a gigantic size $n_\theta\times n_\theta$ which makes it too large to compute and invert in the context of modern deep neural networks with millions of parameters. For practical applications, it is necessary to trade-off quality of curvature information for efficiency.

A long family of algorithms used for optimizing neural networks can be viewed as approximating the diagonal of a large preconditioning matrix. Diagonal approximations of the Hessian \citep{becker1988improving} have been proven to be efficient, and algorithms that use the diagonal of the covariance matrix of the gradients are widely used among neural networks practitioners, such as Adagrad \citep{duchi2011adaptive}, Adadelta \citep{zeiler2012adadelta}, RMSProp \citep{tieleman2012lecture}, Adam \citep{kingma2014adam}. 
We refer the reader to \cite{bottou2016optimization} for an informative review of optimization methods for deep networks, including diagonal rescalings, and connections with the Batch Normalization (BN) \citep{ioffe2015batch} technique.

More elaborate algorithms do not restrict to diagonal approximations, but instead aim at accounting for some correlations between different parameters (as encoded by non-diagonal elements of the preconditioning matrix). These methods range from \cite{ollivier2015riemannian} who introduces a rank 1 update that accounts for the cross correlations between the biases and the weight matrices, to
quasi Newton methods \citep{liu1989limited} that build a running estimate of the exact non-diagonal preconditioning matrix, and also include block diagonals approaches with blocks corresponding to entire layers~\citep{heskes2000natural, desjardins2015natural, martens2015optimizing, fujimoto2018neural}. Factored approximations such as KFAC~\citep{martens2015optimizing,ba2016distributed} approximate each block as a Kronecker product of two much smaller matrices, both of which can be estimated and inverted more efficiently than the full block matrix, since  the inverse of a Kronecker product of two matrices is the Kronecker product of their inverses. 

In the present work, we draw inspiration from both diagonal and factored approximations.
We introduce an Eigenvalue-corrected Kronecker Factorization (EKFAC) that consists in tracking a diagonal variance, not in parameter coordinates, but in a Kronecker-factored eigenbasis (KFE).
We show that EKFAC is a provably better approximation of the Fisher Information Matrix than KFAC. In addition, while computing the Kronecker-factored eigenbasis is a computationally expensive operation that needs to be amortized, tracking of the diagonal variance is a cheap operation. EKFAC therefore allows to perform partial updates of our curvature estimate $G$ at the iteration level.  We conduct an empirical evaluation of EKFAC on the deep auto-encoder optimization task using fully-connected networks and CIFAR-10 relying on deep convolutional neural networks where EKFAC shows improvements over KFAC in optimization.

\section{Background and notations}

We are given a dataset $\mathcal{D}_{\mathrm{train}}$ containing (input, target) examples $(x,y)$, and a neural network $f_\theta(x)$ with parameter vector $\theta$ of size $n_\theta$. We want to find a value of $\theta$ that minimizes an empirical risk $R(\theta)$ expressed as an average of a loss $\ell$  incurred by $f_\theta$ over the training set: $R(\theta)=\mathbb{E}_{(x,y)\in \mathcal{D}_{\mathrm{train}}} 
\left[ \ell(f_\theta(x),y) \right]$. We will use $\mathbb{E}$ to denote both expectations w.r.t. a distribution or,  as here, averages over finite sets, as made clear by the subscript and context.
Considered algorithms for optimizing $R(\theta)$ use stochastic gradients $\nabla_\theta = \nabla_\theta(x,y) = \frac{\partial \ell(f_\theta(x),y)}{\partial \theta}$, or their average over a mini-batch of examples $\mathcal{D}_{\mathrm{mini}}$ sampled from $\mathcal{D}_{\mathrm{train}}$. 
Stochastic gradient descent (SGD) does a $1^\mathrm{st}$ order update: $\theta \leftarrow \theta - \eta \nabla_\theta$ where $\eta$ is a positive learning rate. $2^\mathrm{nd}$ order methods first multiply $\nabla_\theta$ by a preconditioning matrix $G^{-1}$ yielding the update:  $\theta \leftarrow \theta - \eta G^{-1} \nabla_\theta$. 
Preconditioning matrices for Natural Gradient \citep{amari1998natural} / Generalized Gauss-Newton \citep{schraudolph2001fast} / TONGA \citep{roux2008topmoumoute} 
can all be expressed as either (centered) covariance or (un-centered) second moment of $\nabla_\theta$, computed over slightly different distributions of $(x,y)$. The natural gradient uses the Fisher Information Matrix, which for a probabilistic classifier can be expressed as $G = \mathbb{E}_{x \in \mathcal{D}_{\mathrm{train}}, y\sim p_\theta(\mathbf{y}|x) }\left[ \nabla_{\theta}  \nabla_{\theta}^{\top} \right]$ where the expectation is taken over targets sampled form the \emph{model} $p_\theta=f_\theta$. By contrast, the \emph{empirical Fisher} approximation or generalized Gauss-Newton uses
$G= \mathbb{E}_{(x,y) \in \mathcal{D}_{\mathrm{train}}}\left[ \nabla_{\theta}  \nabla_{\theta}^{\top} \right]$. Our discussion and development applies regardless of the precise distribution over $(x,y)$ used to estimate a $G$ so we will from here on use $\mathbb{E}$ without a subscript. 

Matrix $G$ has a gigantic size $n_\theta\times n_\theta$, which makes it too big to compute and invert. In order to get a practical algorithm, we must find approximations of $G$ that keep some of the relevant $2^\mathrm{nd}$ order information while removing the unnecessary and computationally costly parts.
A first simplification, adopted by nearly all prior approaches, consists in treating each layer of the neural network separately, ignoring cross-layer terms. This amounts to a first block-diagonal approximation of $G$: each block $G^{(l)}$ caters for the parameters of a single layer $l$. Now $G^{(l)}$ can typically still be extremely large. 

A cheap but very crude approximation consists in using a diagonal $G^{(l)}$, i.e. taking into account the variance in each parameter dimension, but ignoring all covariance structure. 
A less stringent approximation was proposed by \citet{heskes2000natural} and later \citet{martens2015optimizing}. They propose to approximate $G^{(l)}$ as a Kronecker product $G^{(l)} \approx A\otimes B$ which involves two smaller matrices, making it much cheaper to store, compute and invert\footnote{Since $\left(A \otimes B\right)^{-1}=A^{-1} \otimes B^{-1}$.}.  Specifically for a layer $l$ that receives input $h$ of size $d_\mathrm{in}$ and computes linear pre-activations $a=W^{\top}h$ of size $d_\mathrm{out}$ (biases omitted for simplicity) followed by some non-linear activation function, let the backpropagated gradient on $a$ be $\delta = \frac{\partial \ell}{\partial a}$. The gradients on parameters $\theta^{(l)}=W$ will be $\nabla_W = \frac{\partial \ell}{\partial W}=\mathrm{vec}(h \delta^\top)$. 
The Kronecker factored approximation of corresponding $G^{(l)}= \mathbb{E}\left[ \nabla_W  \nabla_W^{\top} \right]$ will use  $A=\mathbb{E}\left[ h h^{\top} \right]$ and $B=\mathbb{E}\left[ \delta \delta^{\top} \right]$ i.e. matrices of size $d_\mathrm{in} \times d_\mathrm{in}$ and $d_\mathrm{out} \times d_\mathrm{out}$, whereas the full $G^{(l)}$ would be of size $d_\mathrm{in}d_\mathrm{out} \times d_\mathrm{in}d_\mathrm{out} $. Using this Kronecker approximation (known as KFAC) corresponds to approximating entries of $G^{(l)}$ as follows: $G^{(l)}_{ij,i'j'}=\mathbb{E}\left[ \nabla_{W_{ij}}  \nabla_{W_{i'j'}}^{\top} \right] =\mathbb{E}\left[ (h_i \delta_j) (h_{i'} \delta_{j'})\right] \approx \mathbb{E}\left[ h_i  h_{i'} \right] \mathbb{E}\left[ \delta_j  \delta_{j'} \right]$. 

A similar principle can be applied to obtain a Kronecker-factored expression for the covariance of the gradients of the parameters of a convolutional layer \citep{grosse2016kronecker}. To obtain matrices $A$ and $B$ one then needs to also sum over spatial locations and corresponding receptive fields, as illustrated in Figure~\ref{fig:kfac-conv}. 

\begin{figure*}[ht]
 \begin{minipage}[c]{0.35\textwidth}
\includegraphics[width=\textwidth]{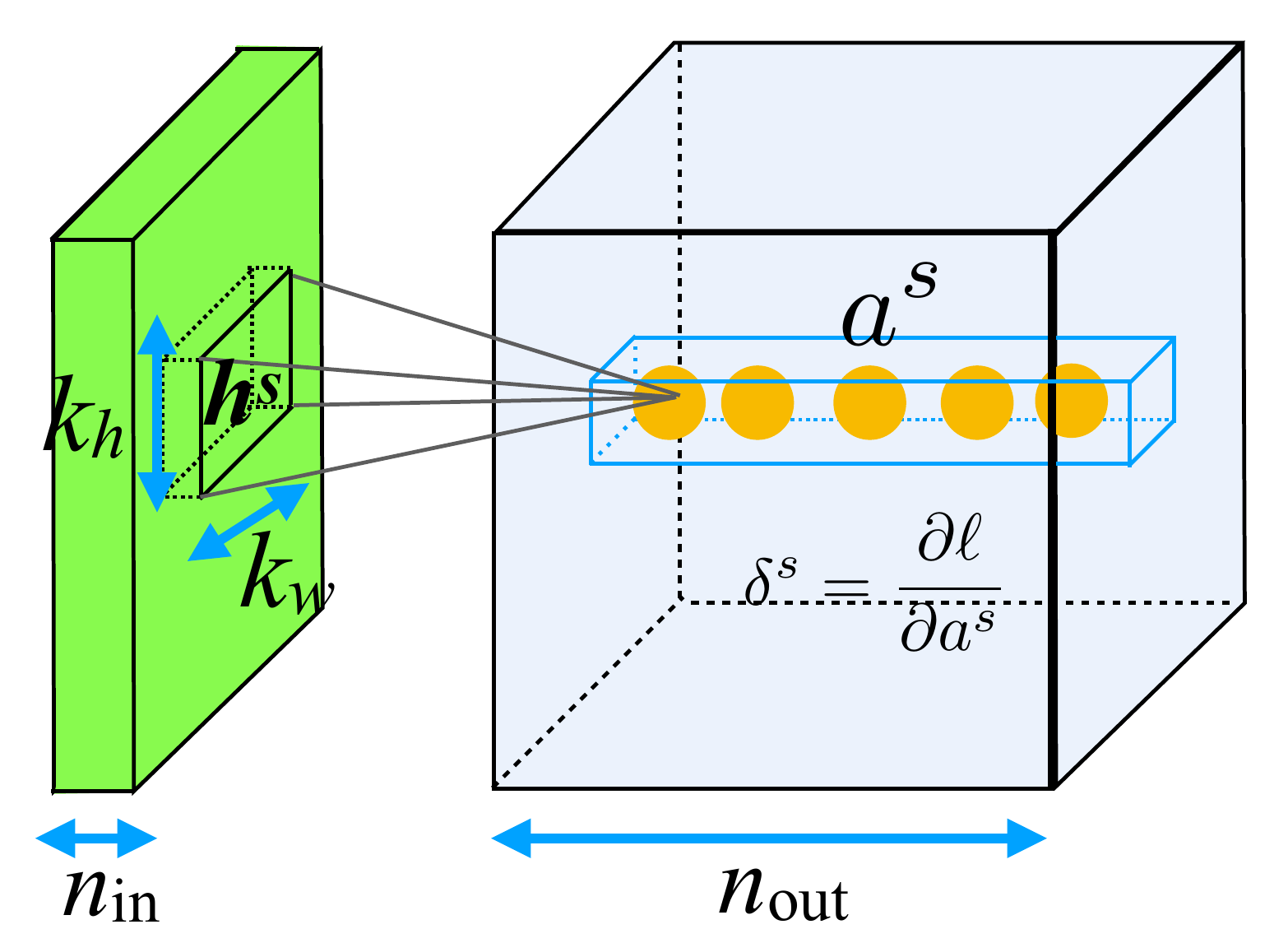}  
\end{minipage}
\hfill
 \begin{minipage}[c]{0.60\textwidth}

$$\underset{\textcolor{DarkBlue}{(n_{in} \, n_{out} \, k_w \, k_h)^2}}{G^{(l)}} \approx 
                \vphantom{\frac{a}{b}}{\underset{\textcolor{DarkBlue}{(n_{in} \, k_w \, k_h)^2}}{A} \otimes \underset{\textcolor{DarkBlue}{(n_{out})^2}}{B}}$$
$ A = \mathbb{E}\left[\sum_{\left(s,s'\right)}h^{s}h^{s'\top}\right] \, , \,
B = \mathbb{E}\left[\sum_{\left(s,s'\right)}\delta^{s}\delta^{s'\top}\right]$\\
$s \in \mathcal{S}$, $s' \in \mathcal{S}$: spatial positions iterated over by the filter\\
 $h^s$: flattened input subtensor (receptive field) at position $s$ \\
 $\delta^s$: gradient of $\ell$ w.r.t. output of filter at position $s$

\end{minipage}

\caption{KFAC for convolutional layer with $n_\mathrm{out} n_\mathrm{in} k_w k_h$  parameters.}
\label{fig:kfac-conv}

\end{figure*}

\section{Proposed method}

\subsection{Motivation: reflexion on diagonal rescaling in different coordinate bases}

It is instructive to contrast the type of ``exact'' natural gradient
preconditioning of the gradient that uses the full Fisher Information
Matrix would yield, to what we do when approximating this by using
a diagonal matrix only. Using the full matrix $G = \mathbb{\mathbb{E}}[\nabla_{\theta}\nabla_{\theta}^{\top}]$
yields the natural gradient update:
$\theta\leftarrow\theta-\eta G^{-1}\nabla_{\theta}$. 
When resorting to a diagonal approximation we instead use $G_{\mathrm{diag}}=\mathrm{diag}(\sigma_{1}^{2},\ldots,\sigma_{n_{\theta}}^{2})$
where $\sigma_{i}^{2}=G_{i,i}=\mathbb{\mathbb{E}}[(\nabla_{\theta})_{i}^{2}]$.
So that update $\theta\leftarrow\theta-\eta G_{\mathrm{diag}}^{-1}\nabla_{\theta}$
amounts to preconditioning the gradient vector $\nabla_{\theta}$
by dividing each of its coordinates by an estimated second moment
$\sigma_{i}^{2}$. This diagonal rescaling happens in the
initial basis of parameters $\theta$. By contrast, a full natural
gradient update can be seen to do a similar diagonal rescaling, not
along the initial parameter basis axes, but along the axes of the\emph{
eigenbasis} of the matrix $G$. Let $G =U S U^{\top}$ be the eigendecomposition
of $G$. The operations that yield the full natural gradient update
$G^{-1}\nabla_{\theta}=U S^{-1} U^{\top}\nabla_{\theta}$
correspond to the sequence of a) multiplying gradient vector $\nabla_{\theta}$
by $U^{\top}$ which corresponds to switching to the eigenbasis: $U^{\top}\nabla_{\theta}$ yields
the coordinates of the gradient vector expressed in that basis b)
multiplying by a diagonal matrix $S^{-1}$, which rescales each coordinate
$i$ (in that eigenbasis) by $S_{ii}^{-1}$ c) multiplying by $U$, which switches
the rescaled vector back to the initial basis
of parameters. It is easy to show that $S_{ii}=\mathbb{\mathbb{E}}[(U^{\top}\nabla_{\theta})_{i}^{2}]$ (proof is given in Appendix \ref{app:proof-SFii}).
So similarly to what we do when using a diagonal approximation, we
are rescaling by the second moment of gradient vector
components, but rather than doing this in the initial parameter basis, we do this in the eigenbasis of
$G$. Note that the variance measured along the leading eigenvector
can be much larger than the variance along the axes of the initial
parameter basis, so the effects of the rescaling by using either the full $G$ or
its diagonal approximation can be very different.

Now what happens when we use the less crude KFAC approximation instead?
We approximate\footnote{This approximation is done separately for each block $G^{(l)}$, we dropped the superscript to simplify notations.} $G \approx A \otimes B$
yielding the update $\theta\leftarrow\theta-\eta (A \otimes B)^{-1}\nabla_{\theta}$.
Let us similarly look at it through its eigendecomposition. The eigendecomposition of the Kronecker product $A \otimes B$ of two real symmetric positive semi-definite matrices can be expressed using their own eigendecomposition $A = U_{A}S_{A}U_{A}^{\top}$ and $B = U_{B}S_{B}U_{B}^{\top}$, yielding $A \otimes B = (U_{A}S_{A}U_{A}^{\top}) \otimes (U_{B}S_{B}U_{B}^{\top}) = (U_A \otimes U_B) (S_A \otimes S_B) (U_A \otimes U_B)^{\top}$. $U_{A}\otimes U_{B}$ gives the orthogonal eigenbasis of the Kronecker product, we call it the Kronecker-Factored Eigenbasis (KFE). $S_{A}\otimes S_{B}$ is the diagonal matrix containing the associated
eigenvalues. Note that each such eigenvalue will be a product of an eigenvalue
of $A$ stored in $S_{A}$ and an eigenvalue of $B$ stored in $S_{B}$. 
We can view the action of the resulting Kronecker-factored preconditioning
in the same way as we viewed the preconditioning by the full matrix:
it consists in a) expressing gradient vector $\nabla_{\theta}$ in
a different basis $U_{A}\otimes U_{B}$ which can be thought of as
approximating the directions of $U$, b) doing a diagonal rescaling by $S_{A}\otimes S_{B}$ \emph{in that basis, }c) switching
back to the initial parameter space. 
Here however the rescaling factor $(S_{A}\otimes S_{B})_{ii}$ is
\emph{not} guaranteed to match the second moment
along the associated eigenvector $\mathbb{\mathbb{E}}[((U_{A}\otimes U_{B})^{\top}\nabla_{\theta})_{i}^{2}]$.

In summary (see Figure \ref{fig:rescalings}): 
\begin{itemize}
\item Full matrix $G$ preconditioning will scale by variances estimated along the eigenbasis of $G$.
\item Diagonal preconditioning will scale by variances properly estimated,
but along the initial parameter basis, which can be very far from
the eigenbasis of $G$.
\item KFAC preconditioning will scale the gradient along the KFE basis that will
likely be closer to the eigenbasis of $G$, but doesn't use properly
estimated variances along these axes for this scaling (the scales
being themselves constrained to being a Kronecker product $S_{A}\otimes S_{B}$).
\end{itemize}

\begin{figure*}[bht]
 \begin{minipage}[c]{0.25\textwidth}
\includegraphics[width=\textwidth]{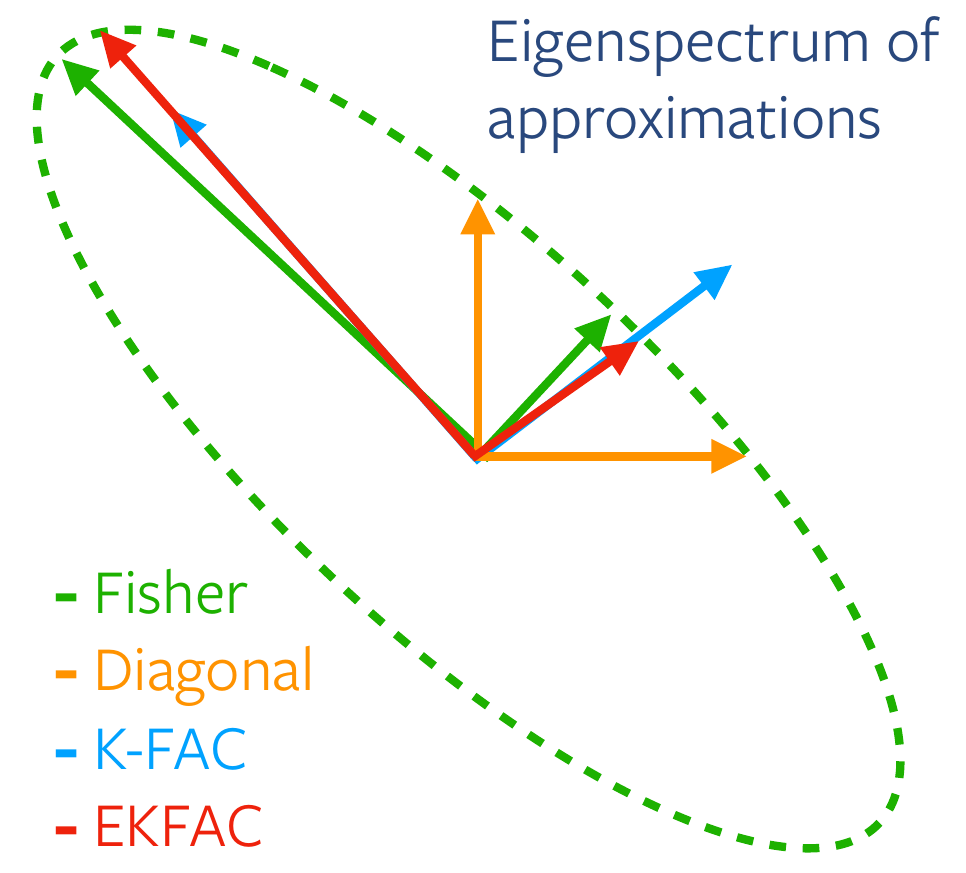}  \end{minipage}\hfill
  \begin{minipage}[c]{0.74\textwidth} %
  Rescaling of the gradient is done along a specific basis; length of vectors indicate (square root of) amount of downscaling. Exact Fisher Information Matrix rescales according to eigenvectors/values of exact covariance structure (green ellipse). Diagonal approximation uses parameter coordinate basis, scaling by actual variance measured along these axes (indicated by horizontal and vertical orange arrows touching exactly the ellipse), KFAC uses directions that approximate Fisher Information Matrix eigenvectors, but uses approximate scaling (blue arrows \emph{not} touching the ellipse). EKFAC corrects this.
  \end{minipage}
 \caption{Cartoon illustration of rescaling achieved by different preconditioning strategies
\label{fig:rescalings}}
\end{figure*}

\subsection{Eigenvalue-corrected Kronecker Factorization (EKFAC)}

To correct for the potentially inexact rescaling of KFAC, and obtain a better but still computationally
efficient approximation, instead of $G_{\mathrm{KFAC}}=A\otimes B=(U_{A}\otimes U_{B})(S_{A}\otimes S_{B})(U_{A}\otimes U_{B})^{\top}$
we propose to use an \emph{Eigenvalue-corrected Kronecker Factorization}:
$G_{\textrm{EKFAC}}=(U_{A}\otimes U_{B})S^*(U_{A}\otimes U_{B})^{\top}$
where $S^*$ is the diagonal matrix defined by $S^*_{ii}=s^*_{i}=\mathbb{\mathbb{E}}[((U_{A}\otimes U_{B})^{\top}\nabla_{\theta})_{i}^{2}]$.
Vector $s^*$ is the vector of second moments of the gradient vector coordinates expressed in the Kronecker-factored Eigenbasis (KFE)  $U_{A}\otimes U_{B}$ and can be efficiently estimated and stored.

In Appendix~\ref{app:proof-EKFAC} we prove that this $S^*$ is the optimal diagonal rescaling in that basis, in the sense that $S^* = \arg\min_{S}\|G -(U_{A}\otimes U_{B})S(U_{A}\otimes U_{B})^{\top} \|_F$ s.t. $S$ is diagonal: it minimizes the approximation error to $G$ as measured by Frobenius norm (denoted $\| \cdot \|_F$), which KFAC's corresponding $S= S_{A}\otimes S_{B}$ cannot generally achieve. A corollary of this is that we will always have $\|G - G_\mathrm{EKFAC}\|_F \leq \|G - G_\mathrm{KFAC}\|_F$ i.e. EKFAC yields a better approximation of $G$ than KFAC (Theorem \ref{thm:EKFAC-better} proven in Appendix). Figure \ref{fig:rescalings} illustrates the different rescaling strategies, including EKFAC.

\paragraph{Potential benefits:} Since EKFAC is a better approximation of $G$ than KFAC (in the limited sense of Frobenius norm of the residual) it may yield a better preconditioning of the gradient for optimizing neural networks\footnote{Although there is no guarantee. In particular $G_\mathrm{EKFAC}$ being a better approximation of $G$ does not guarantee that $G_\mathrm{EKFAC}^{-1} \nabla_\theta$ will be closer to the natural gradient update direction $G^{-1} \nabla_\theta$ .}. Another benefit is linked to computational efficiency: even if KFAC yields a reasonably good approximation in practice, it is costly to re-estimate and invert matrices $A$ and $B$, so this has to be amortized over many updates: re-estimation of the preconditioning is thus typically done at a much lower frequency than the parameter updates, and may lag behind, no longer accurately reflecting the local $2^\mathrm{nd}$ order information. Re-estimating the Kronecker-factored Eigenbasis (KFE) for EKFAC is similarly costly and must be similarly amortized. But re-estimating the diagonal scaling $s^*$ in that basis is cheap, doable with every mini-batch, so we can hope to reactively track and leverage the changes in $2^\mathrm{nd}$ order information along these directions. Figure~\ref{fig:kfe_space} (right) provides an empirical confirmation that tracking $s^*$ indeed allows to keep the approximation error of $G_\mathrm{EKFAC}$ small, compared to $G_\mathrm{KFAC}$, between recomputations of basis or inverse .

\begin{figure*}[htb]
	\centering
	\includegraphics[width=0.5\textwidth]{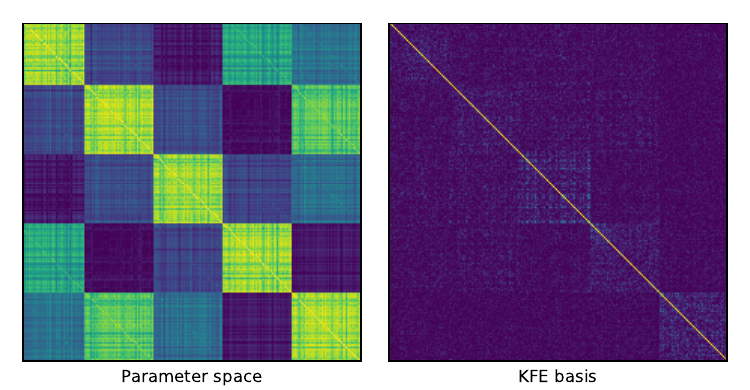}
	\includegraphics[width=0.4\textwidth]{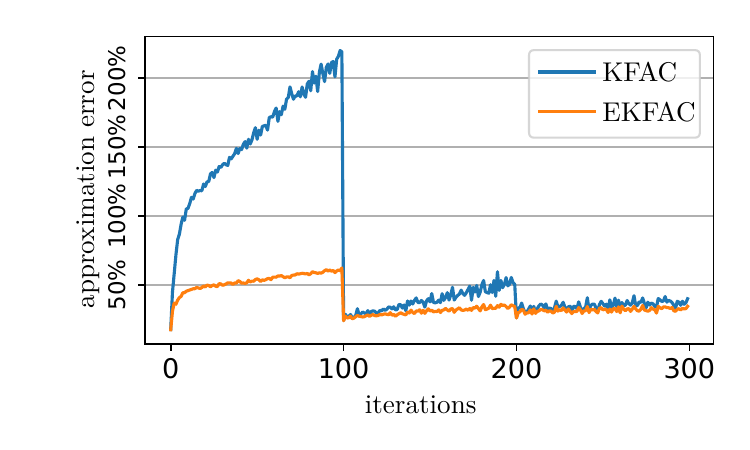}

	\caption{\label{fig:kfe_space}
	\textbf{Left:} Gradient \emph{correlation} matrices measured in the initial parameter basis and in the Kronecker-factored Eigenbasis (KFE), computed from a small 4 sigmoid layer MLP classifier trained on MNIST. Block corresponds to 250 parameters in the 2nd layer. Components are largely decorrelated in the KFE, justifying the use of a diagonal rescaling method \emph{in that basis}. \\
	\textbf{Right:} Approximation error $\frac{\Vert G-\hat{G}\Vert_{F}}{\Vert G\Vert_{F}}$ where %
	$\hat{G}$ is either $G_\mathrm{KFAC}$ or $G_\mathrm{EKFAC}$, for the small MNIST classifier. KFE basis and KFAC inverse are recomputed every 100 iterations. EKFAC’s cheap tracking of $s^*$ allows it to drift far less quickly than amortized KFAC from the exact empirical Fisher.
	}
\end{figure*}

\definecolor{aqua}{rgb}{0.0, 1.0, 1.0}
\definecolor{amber}{rgb}{1.0, 0.75, 0.0}
\paragraph{Dual view by working in the KFE:} Instead of thinking of this new method as an improved factorized approximation of $G$ that we use as a preconditioning matrix, we can alternatively view it as applying a \emph{diagonal method, but in a different basis where the diagonal approximation is more accurate} (an assumption we empirically confirm in Figure~\ref{fig:kfe_space} left). This can be seen by interpreting the update given by EKFAC as a 3 step process: project the gradient in the KFE (\textcolor{red}{\textbf{--}}), apply \emph{diagonal} natural gradient descent in this basis (\textcolor{aqua}{\textbf{--}}), then project back to the parameter space (\textcolor{amber}{\textbf{--}}):

\[
G_{\text{EKFAC}}^{-1}\nabla_{\theta}=\mathunderline{amber}{(U_{A}\otimes U_{B})\mathunderline{aqua}{S^{*-1}\mathunderline{red}{(U_{A}\otimes U_{B})^{\top}\nabla_{\theta}}}}
\]

Note that by writing $\tilde{\nabla}_{\theta}=\left(U_{A}\otimes U_{B}\right)^{\top}\nabla_{\theta}$ the projected gradient in KFE, the computation of the coefficients $s^*_{i}$ simplifies in $s^*_{i}=\mathbb{\mathbb{E}}[(\tilde{\nabla}_{\theta})_{i}^{2}]$. Figure~\ref{fig:kfe_space} shows gradient correlation matrices in both the initial parameter basis and in the KFE. Gradient components appear far less  correlated when expressed in the KFE, which justifies using a diagonal rescaling method \emph{in that basis}.

This viewpoint brings us close to network reparametrization approaches such as \citet{fujimoto2018neural}, whose proposal -- that was already hinted towards by \citet{desjardins2015natural} -- amounts to a reparametrization equivalent of KFAC. More precisely, while \citet{desjardins2015natural} empirically explored a reparametrization that uses only input covariance $A$ (and thus corresponds only to "half of" KFAC), \citet{fujimoto2018neural} extend this to use also backpropagated gradient covariance $B$, making it essentially equivalent to KFAC (with a few extra twists). Our approach differs in that moving to the KFE corresponds to a change of \emph{orthonormal basis}, and more importantly that we \emph{cheaply track} and perform a more optimal \emph{full diagonal} rescaling in that basis, rather than the constrained factored $S_{A}\otimes S_{B}$ diagonal that these other approaches are implicitly using.

\paragraph{Algorithm:} Using Eigenvalue-corrected Kronecker factorization (EKFAC) for neural network optimization involves: a) periodically (every $n$ mini-batches) computing the Kronecker-factored Eigenbasis by doing an eigendecomposition of the same $A$ and $B$ matrices as KFAC; b) estimating scaling vector $s^*$ as  second moments of gradient coordinates in that implied basis; c) preconditioning gradients accordingly prior to updating model parameters. Algorithm \ref{algo:ekfac_procedures} provides a high level pseudo-code of EKFAC for the case of fully-connected layers\footnote{EKFAC for convolutional layers follows the same structure, but require a more convoluted notation.}, and when using it to approximate the empirical Fisher. In this version, we re-estimate $s^*$ from scratch on each mini-batch. An alternative (EKFAC-ra) is to update $s^*$ as a \emph{running average} of component-wise second moment of mini-batch averaged gradients.

\begin{algorithm}[h!!!]
\caption{EKFAC for fully connected networks\label{algo:ekfac_procedures}}
\begin{algorithmic}

\Require $n$: recompute eigenbasis every $n$ minibatches
\Require $\eta$: learning rate
\Require $\epsilon$: damping parameter

\State

\Procedure{EKFAC}{$\mathcal{D}_\mathrm{train}$}

\While{convergence is not reached, iteration $i$}
\State sample a minibatch $\mathcal{D}$ from $\mathcal{D}_\mathrm{train}$
\State Do forward and backprop pass as needed to obtain $h$ and $\delta$

\ForAll{layer $l$ } 
\If{$i \, \% \, n = 0$}\Comment{Amortize eigendecomposition}
\State \Call{ComputeEigenBasis}{$\mathcal{D}$, $l$}
\EndIf
\State \Call{ComputeScalings}{$\mathcal{D}$, $l$}
\State $\nabla^\mathrm{mini} \leftarrow \mathbb{E}_{(x,y) \in \mathcal{D}} \left[ \nabla_\theta^{(l)}(x,y) \right] $
\State \Call{UpdateParameters}{$\nabla^\mathrm{mini}$, $l$}
\EndFor  

\EndWhile
\EndProcedure

\State

\Procedure{ComputeEigenBasis}{$\mathcal{D}$, $l$}
 \State $U_A^{(l)},S_A^{(l)} \leftarrow \textrm{eigendecomposition}\left( \mathbb{E}_\mathcal{D} \left[ h^{\left(l\right)} h^{\left(l\right)\top} \right] \right)$
\State $U_B^{(l)}, S_B^{(l)} \leftarrow \textrm{eigendecomposition}\left( \mathbb{E}_\mathcal{D} \left[ \delta^{\left(l\right)} \delta^{\left(l\right)\top} \right] \right)$
\EndProcedure

\State

\Procedure{ComputeScalings}{$\mathcal{D}$, $l$}
  \State $s^{*(l)} \leftarrow \mathbb{E}_\mathcal{D} \left[ \left( \left(U_{A}^{(l)} \otimes U_{B}^{(l)}\right)^{\top} \nabla_{\theta}^{(l)}  \right)^2 \right]$ \Comment{Project gradient in eigenbasis\footnotemark[1]}
\EndProcedure

\State

\Procedure{UpdateParameters}{$\nabla^\mathrm{mini}$, $l$}
\State $\tilde{\nabla} \leftarrow \left(U_A^{(l)} \otimes U_B^{(l)}\right)^{\top} \nabla^\mathrm{mini}$
\Comment{Project gradients in eigenbasis\footnotemark[1]}

\State $\tilde{\nabla} \leftarrow \tilde{\nabla} / \left( s^{*\left(l\right)} + \epsilon \right)$
\Comment{Element-wise scaling}

\State $\nabla^\mathrm{precond} \leftarrow  \left(U_A^{(l)} \otimes U_B^{(l)}\right) \tilde{\nabla}$
\Comment{Project back in parameter basis\footnotemark[1]}

\State $\theta^{\left(l\right)} \leftarrow \theta^{\left(l\right)} - \eta \nabla^\mathrm{precond}$
\Comment{Update parameters}

\EndProcedure

\State
\footnotetext{\footnotemark[1]Can be efficiently computed using the following identity: $(A \otimes B) \mathrm{vec}(C) = B^\top C A$}

\end{algorithmic}
\end{algorithm}

\section{Experiments} \label{sec:experiments}
This section presents an empirical evaluation of our proposed Eigenvalue Corrected KFAC (EKFAC) algorithm in two variants:  EKFAC estimates scalings $s^*$ as second moment of intrabatch gradients (in KFE coordinates) as in Algorithm 1, whereas EKFAC-ra estimates  $s^*$ as a running average of squared minibatch gradient (in KFE coordinates). We compare them  with KFAC and other baselines, primarily SGD with momentum, with and without batch-normalization (BN).
For all our experiments KFAC and EKFAC approximate the \emph{empirical Fisher} $G$.
This research focuses on improving optimization techniques, so except when specified otherwise, we performed model and hyperparameter selection based on the performance of the optimization objective, i.e. on training loss. 

\subsection{Deep auto-encoder}

We  consider the task of minimizing the reconstruction error of an 8-layer auto-encoder on the MNIST dataset, a standard task used to benchmark optimization algorithms~\citep{hinton2006reducing, martens2015optimizing, desjardins2015natural}.
The model consists of an encoder composed of 4 fully-connected sigmoid layers, with a number of hidden units per layer of respectively $1000$, $500$, $250$, $30$, and a symmetric decoder (with untied weights).

We compare EKFAC, computing the second moment statistics through its mini-batch, and EKFAC-ra, its running average variant, with different baselines (KFAC, SGD with momentum and BN, ADAM with BN).
For each algorithm, best hyperparameters were selected using a mix of grid and random search based on training error.
Grid values for hyperparameters are: learning rate $\eta$ and damping $\epsilon$ in $\left\{ 10^{-1},\,10^{-2},\,10^{-3},\,10^{-4}\right\} $, mini-batch size in $\left\{ 200, \,500\right\}$.%
In addition we explored 20  values for  $(\eta,  \epsilon)$ by random search around each grid points.
 We found that extra care must be taken when choosing the values of the  learning rate and damping parameter $\epsilon$ in order to get good performances, as often observed when working with algorithms that leverage curvature information (see Figure~\ref{fig:kfac_date_mnist} (d)). 
 The learning rate and the damping parameters are kept constant during training.

\begin{figure*}[ht]
    \vspace*{-0.2cm}
	\centering

	\subfloat[Training loss]{\includegraphics[width=0.25\textwidth]{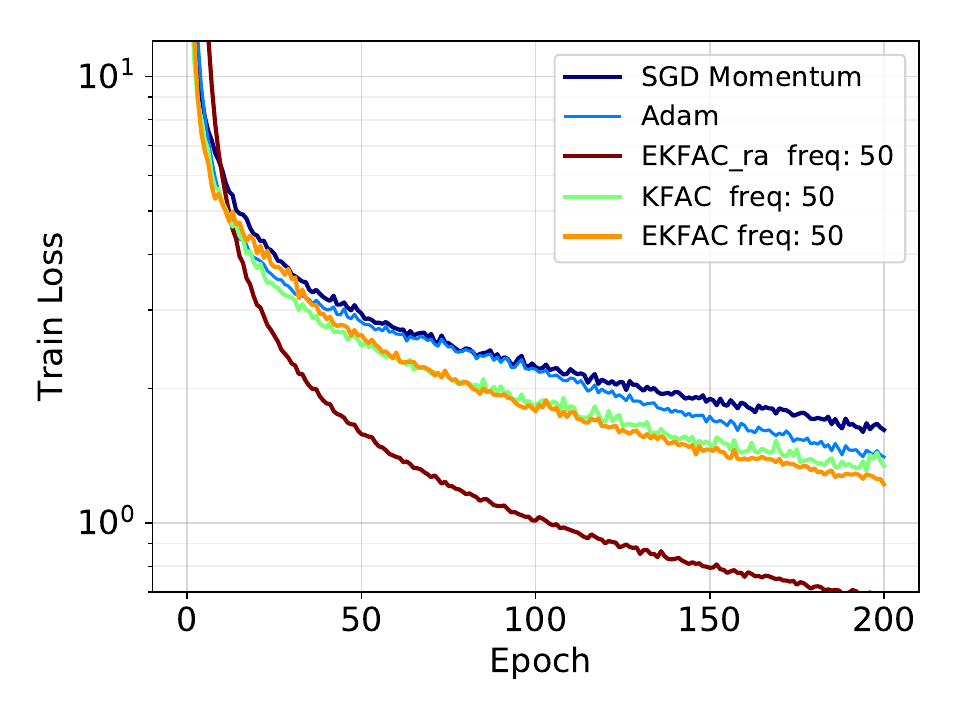}}
   	\subfloat[Wall-Clock Time]{\includegraphics[width=0.25\textwidth]{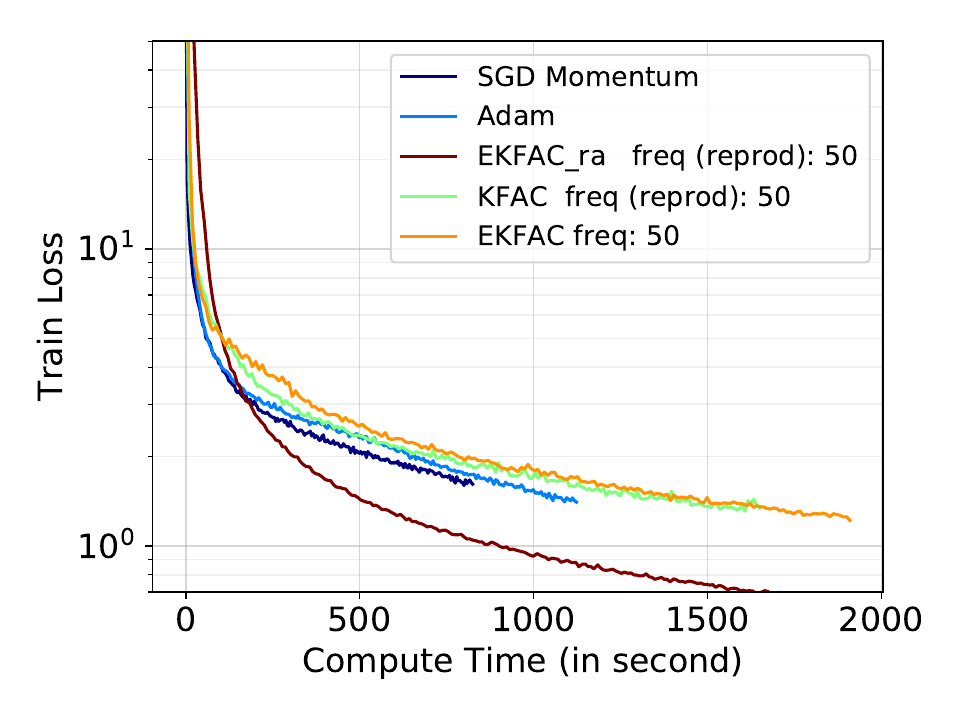}}
   	\subfloat[Validation Loss]{\includegraphics[width=0.25\textwidth]{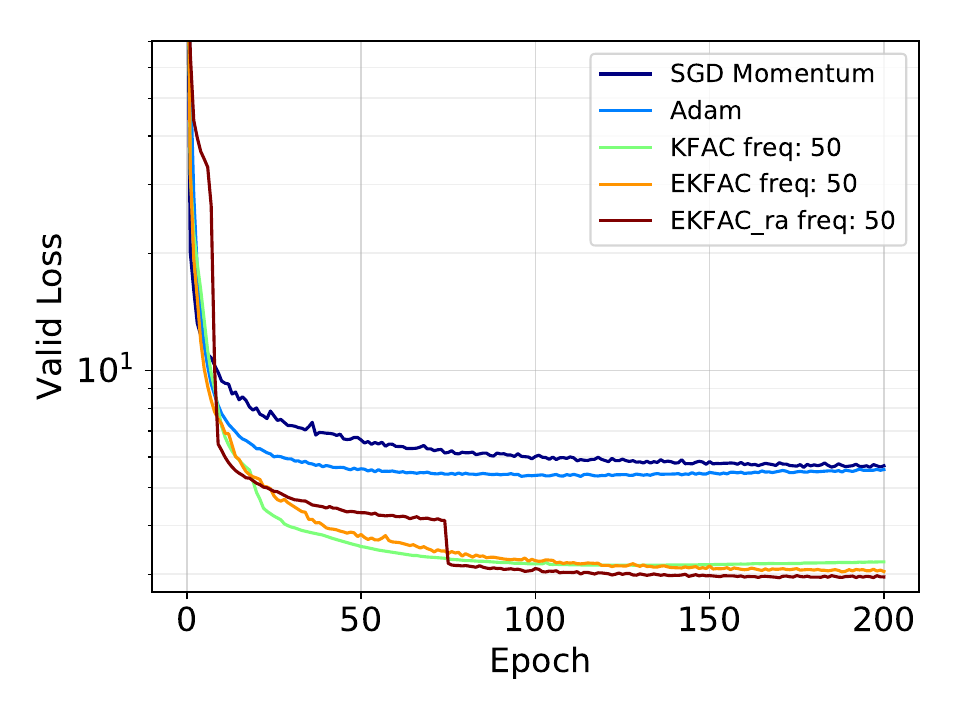}}
   	\subfloat[Hyperparameters]{\includegraphics[width=0.25\textwidth]{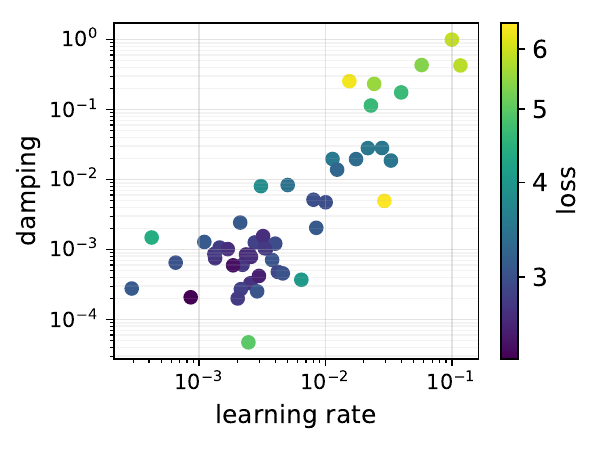}}

	\caption{\label{fig:kfac_date_mnist}
	MNIST Deep Auto-Encoder task. Models are selected based on the best loss achieved during training. SGD and Adam are with batch-norm. A "freq" of 50 means eigendecomposition or inverse is recomputed every 50 updates. \textbf{(a)} Training loss vs epochs. Both EKFAC and EKFAC-ra show an optimization benefit compared to amortized-KFAC and the other baselines. \textbf{(b)} Training loss vs wall-clock time. Optimization benefits transfer to faster training for EKFAC-ra. \textbf{(c)} Validation performance. KFAC and EKFAC achieve a similar validation performance. \textbf{(d)} Sensitivity to hyperparameters values. %
 Color corresponds to final loss reached after $20$ epochs for batch size $200$.}
\end{figure*}

Figure~\ref{fig:kfac_date_mnist} (a) reports the training loss throughout training and shows that EKFAC and EKFAC-ra both minimize faster the training loss per epoch than KFAC and the other baselines. In addition, Figure~\ref{fig:kfac_date_mnist} (b) shows that the efficient tracking of diagonal scaling vector $s^*$ in EKFAC-ra, despite its slightly increased computational burden per update, allows to achieve faster training in wall-clock time. %
Finally, on this task, EKFAC and EKFAC-ra achieve better optimization while also maintaining a very good generalization performances (Figure~\ref{fig:kfac_date_mnist} (c)).

\begin{figure*}[ht]
	\centering
	\subfloat[Training loss]{\includegraphics[width=0.25\textwidth]{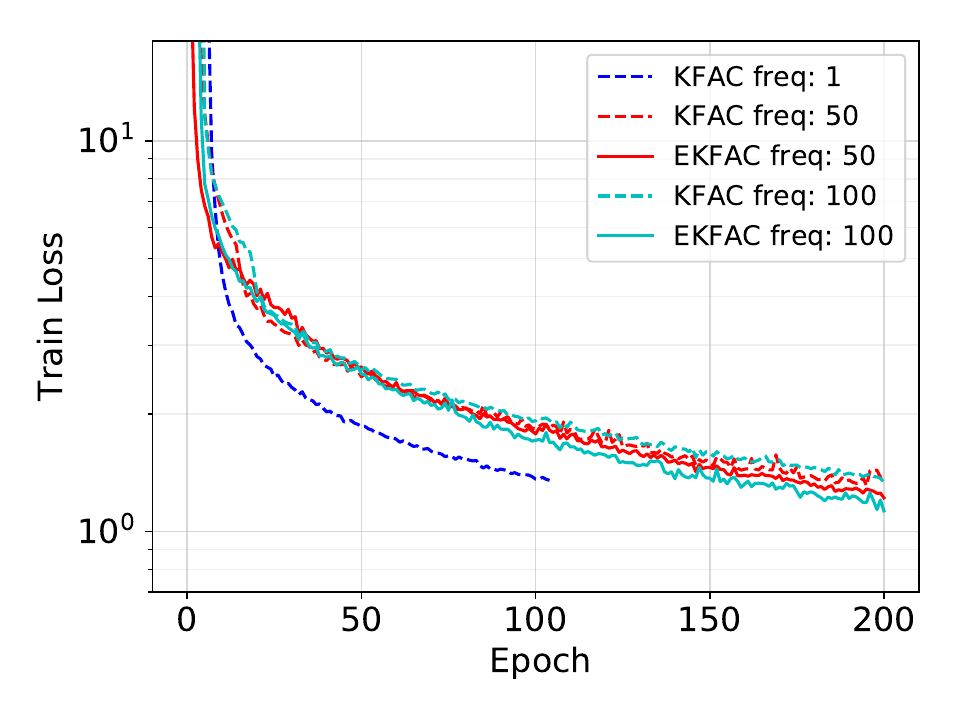}}
   	\subfloat[Wall-clock time]{\includegraphics[width=0.25\textwidth]{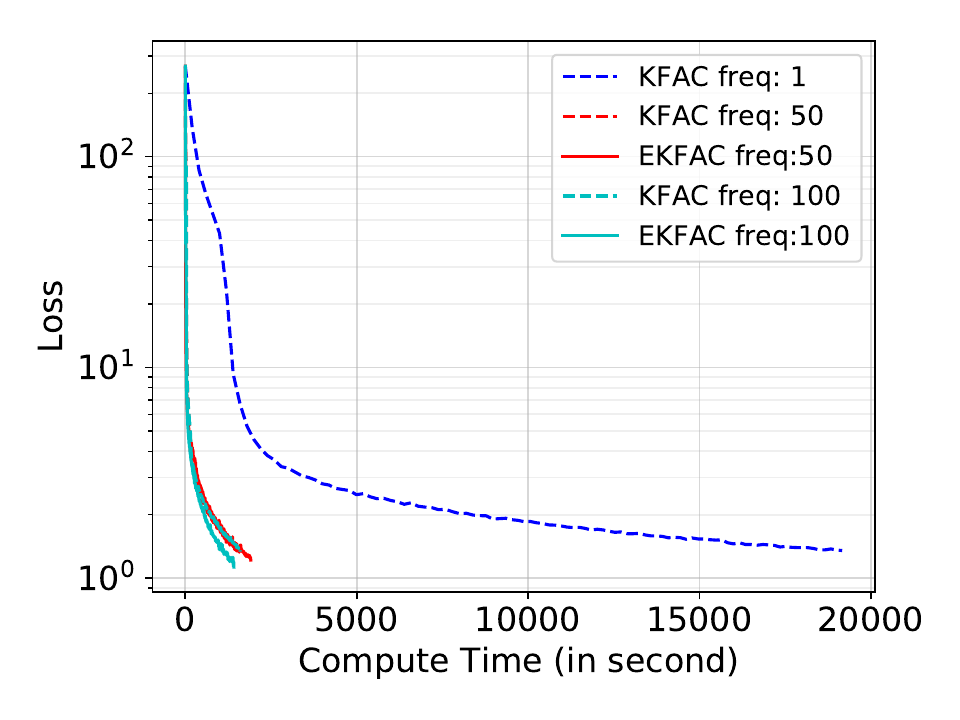}}
   	\subfloat[$l_2$ distance to spectrum of $G$]{\includegraphics[width=0.25\textwidth]{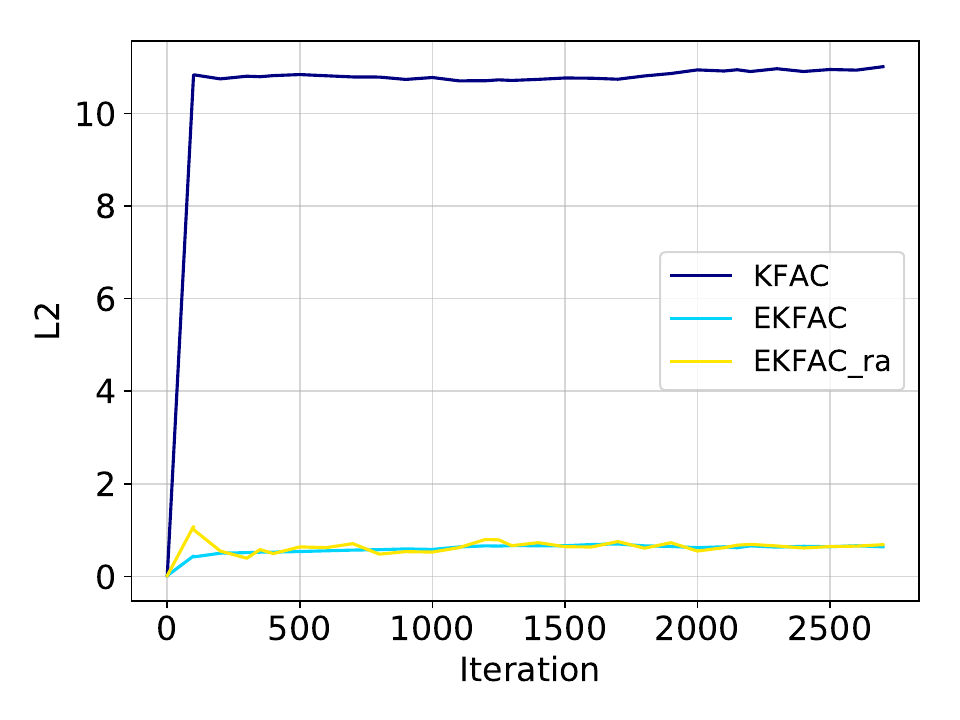}}
	\caption{
    Impact of frequency of inverse/eigendecomposition recomputation for KFAC/EKFAC. A "freq" of 50  indicates a recomputation every 50 updates.
    \textbf{(a)(b)} Training loss v.s. epochs and wall-clock time. We see that EKFAC preserves better its optimization performances when the eigendecomposition is performed less frequently. \textbf{(c)}. 
 Evolution of $l_2$ distance between the eigenspectrum of empirical Fisher $G$ and eigenspectra of approximations $G_\mathrm{KFAC}$ and $G_\mathrm{EKFAC}$.  We see that the spectrum of $G_\mathrm{KFAC}$  quickly diverges from the spectrum of $G$, whereas the EKFAC variants, thanks to their frequent diagonal reestimation, manage to much better track $G$. %
	\label{fig:freq_mnist} }
\end{figure*}

Next we investigate how the frequency of the inverse/eigendcomposition recomputation affects  optimization. In Figure~\ref{fig:freq_mnist}, we compare KFAC/EKFAC with different reparametrization frequencies to a strong KFAC baseline where we reestimate and compute the inverse at each update. This baseline outperforms the amortized version (as a function of number of epochs), and is likely to leverage a better approximation of $G$ as it recomputes the approximated eigenbasis \emph{at each update}.  However it comes at a high computational cost, as seen in Figure~\ref{fig:freq_mnist} (b). Amortizing the eigendecomposition allows to strongly decrease the computational cost while slightly degrading the optimization performances. As can be seen in Figure~\ref{fig:freq_mnist} (a), amortized EKFAC preserves better the optimization performance than amortized KFAC.
EKFAC re-estimates at each update, %
the diagonal second moments $s^*$ in the KFE basis, which correspond to the eigenvalues of the EKFAC approximation of $G$. Thus it should better track the true curvature of the loss function. %
To verify this, we investigate how the eigenspectrum of the true empirical Fisher $G$ changes compared to the eigenspectrum of its approximations as $G_\mathrm{KFAC}$ (or $G_\mathrm{EKFAC}$).  In Figure~\ref{fig:freq_mnist} (c), we track their eigenspectra and report the $l_2$ distance between them during training.
We compute the KFE once at the beginning and then keep it fixed during training. We focus on the  $4^\mathrm{th}$ layer of the auto-encoder: its small size allows to estimate the corresponding $G$ and compute its eigenspectrum at a reasonable cost.
We see that the spectrum of $G_\mathrm{KFAC}$ quickly diverges from the spectrum of $G$, whereas the cheap frequent reestimation of the diagonal scaling for $G_\mathrm{EKFAC}$ and $G_\mathrm{EKFAC-ra}$ allows their spectrum to stay much closer to that of $G$. %
This is consistent with the evolution of approximation error shown earlier in Figure~\ref{fig:kfe_space} on the small MNIST classifier.

\subsection{CIFAR-10}

\begin{figure*}[h!]
	\centering
        \vspace*{-0.2cm}
	\subfloat[Training loss]{\includegraphics[width=0.25\textwidth]{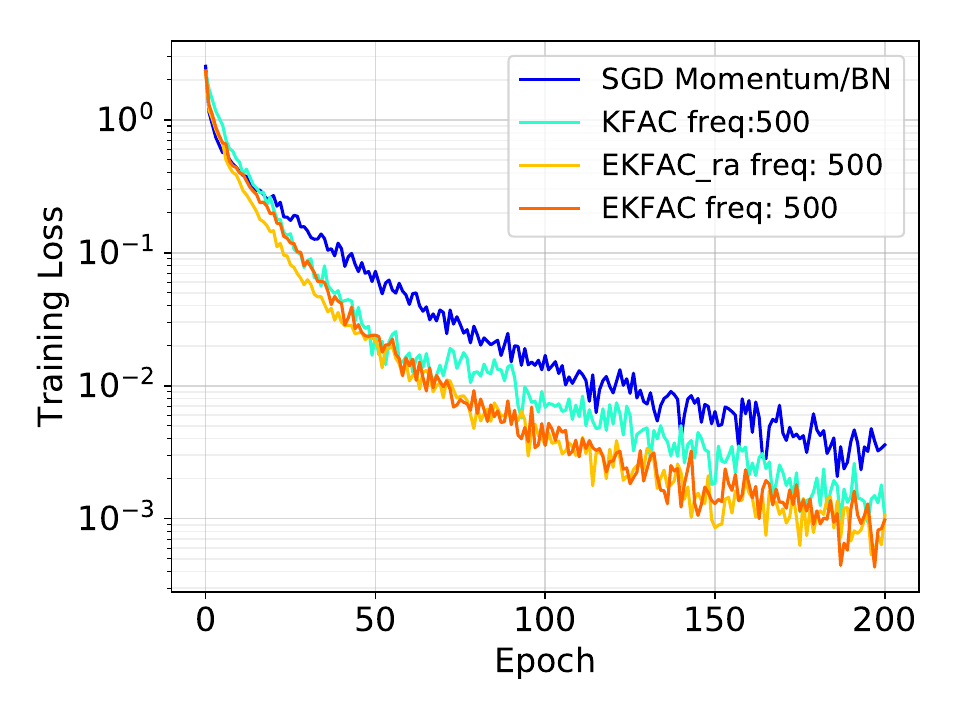}}
   	\subfloat[Wall-clock time]{\includegraphics[width=0.25\textwidth]{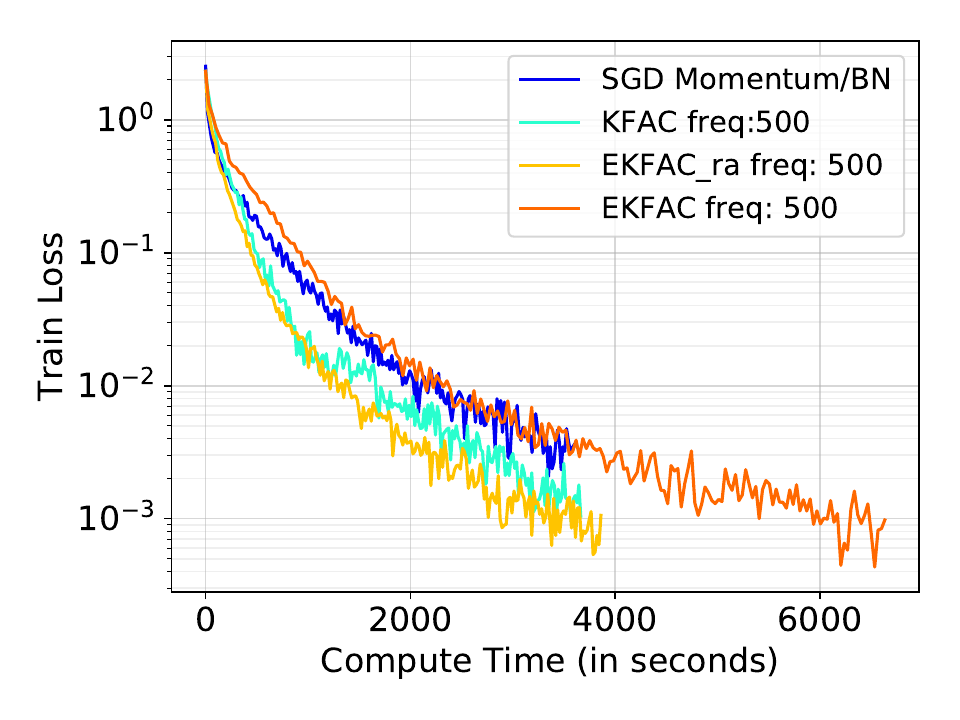}}
   	\subfloat[Accuracy (solid is train, dash is validation)]{\includegraphics[width=0.25\textwidth]{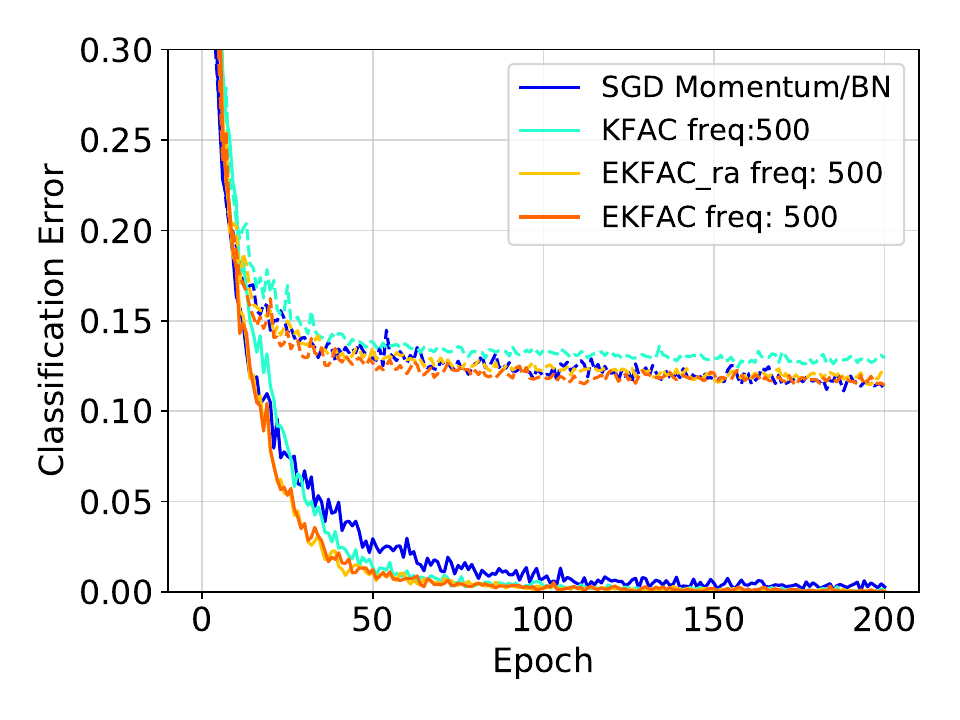}}

	\caption{VGG11 on CIFAR-10. "freq" corresponds to the eigendecomposition (inverse) frequency. In \textbf{(a)} and \textbf{(b)}, we report the performance of the hyper-parameters reaching the lowest training loss for each epoch (to highlight which optimizers perform best given a fixed epoch budget). In \textbf{(c)} models are selected according to the best overall validation error. When the inverse/eigendecomposition is amortized on 500 iterations, EKFAC-ra shows an optimization benefit while maintaining its generalization capability.}
	\label{fig:vgg11}
\end{figure*}

In this section, we evaluate our proposed algorithm on the CIFAR-10 dataset using a VGG11 convolutional neural network~\citep{simonyan2014very} and a Resnet34~\citep{he2016deep}.
To implement KFAC/EKFAC in a convolutional neural network, we rely on the SUA approximation~\citep{grosse2016kronecker} which has been shown to be competitive in practice~\citep{laurent2018evaluation}.
We highlight that we do not use BN  in our model when they are trained using KFAC/EKFAC.

As in the previous experiments, a grid search is performed to select the hyperparameters. Around each grid point, learning rate and damping values are further explored through random search. We experiment with constant learning rate in this section, but explore learning rate schedule with KFAC/EKFAC in Appendix~\ref{sec:lrs}.
the damping parameter is initialized according to Appendix~\ref{sec:eps_init}. 
In the figures that show the model training loss per epoch or wall-clock time, we report the performance of the hyper-parameters attaining the lowest training loss for each epoch. This per-epoch model selection allows to show which model type reaches the lowest cost during training and also which one optimizes best  given any ``epoch budget''. We did not find one single set of hyperparameter for which the EKFAC optimization curve is below KFAC for all the epochs (and vice-versa). However, doing a per-epoch model selection shows that the best EKFAC configuration usually outperforms the best KFAC for any chosen target epoch. This is also true for any chosen compute time budget.

In Figure~\ref{fig:vgg11}, we compare  EKFAC/EKFAC-ra to KFAC and SGD Momentum with or without BN when training a VGG-11 network. 
We use a batch size of 500 for the KFAC based approaches and 200 for the SGD baselines.
Figure~\ref{fig:vgg11} (a) show that EKFAC yields better optimization than the SGD baselines and KFAC in  training loss per epoch when the computation of the KFE is amortized.  
 Figure~\ref{fig:vgg11} (c) also shows that models trained with EKFAC maintain good generalization.
EKFAC-ra shows some wall-clock time improvements over the baselines in that setting ( Figure~\ref{fig:vgg11} (b)).
However, we observe that using KFAC with a batch size of 200 can catch-up with EKFAC (but not EKFAC-ra) in wall-clock time despite being outperformed in term of optimization per iteration (see Figure~\ref{fig:vgg11_bs_comp}, in Appendix). VGG11 is a relatively small network by modern standard and the KFAC (with SUA approximation) remains computationally bearable in this model. We hypothesize that using smaller batches, KFAC can be updated often enough per epoch to have a reasonable estimation error while not paying too high a computational price.

In Figure~\ref{fig:resnet34}, we report similar results when training a Resnet34. We compare EKFAC-ra with KFAC, and SGD with momentum and BN. To be able to train the Resnet34 without BN, we need to rely on a careful initialization scheme (detailed in Appendix~\ref{sec:init}) in order to ensure good signal propagation during the forward and backward passes. EKFAC-ra outperforms both KFAC (when amortized) and SGD with momentum and BN in term of optimization per epochs, and compute time. This gain appears robust across different batch sizes (see Figure ~\ref{fig:resnet_bs} in the Appendix).

\begin{figure*}[h!]
	\centering
    \vspace*{-0.2cm}
	\subfloat[Training loss]{\includegraphics[width=0.25\textwidth]{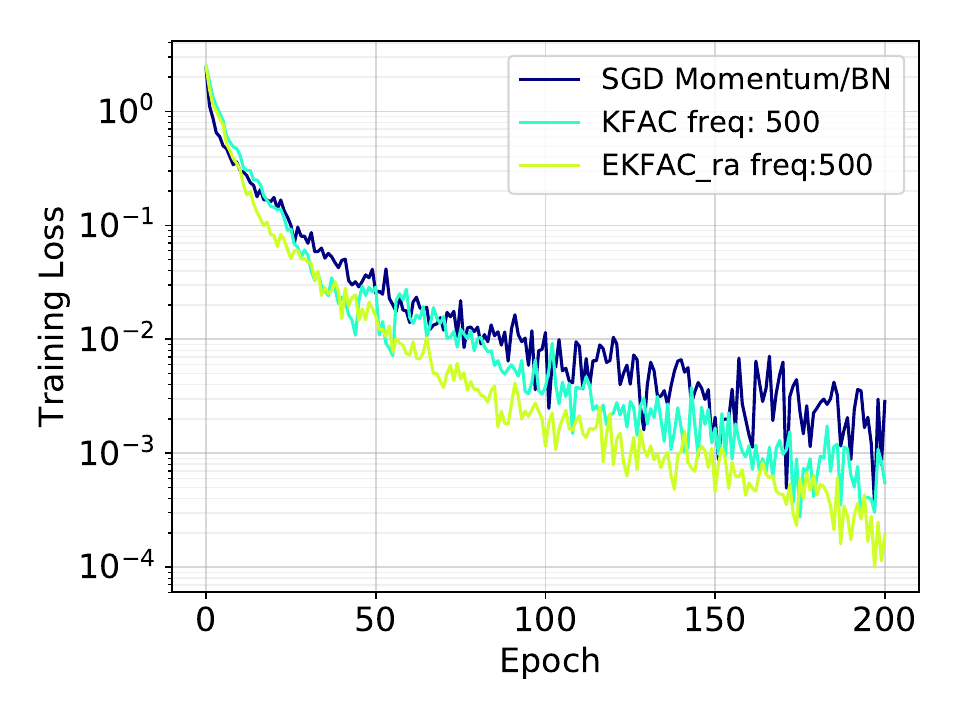}}
   	\subfloat[Wall-clock time]{\includegraphics[width=0.25\textwidth]{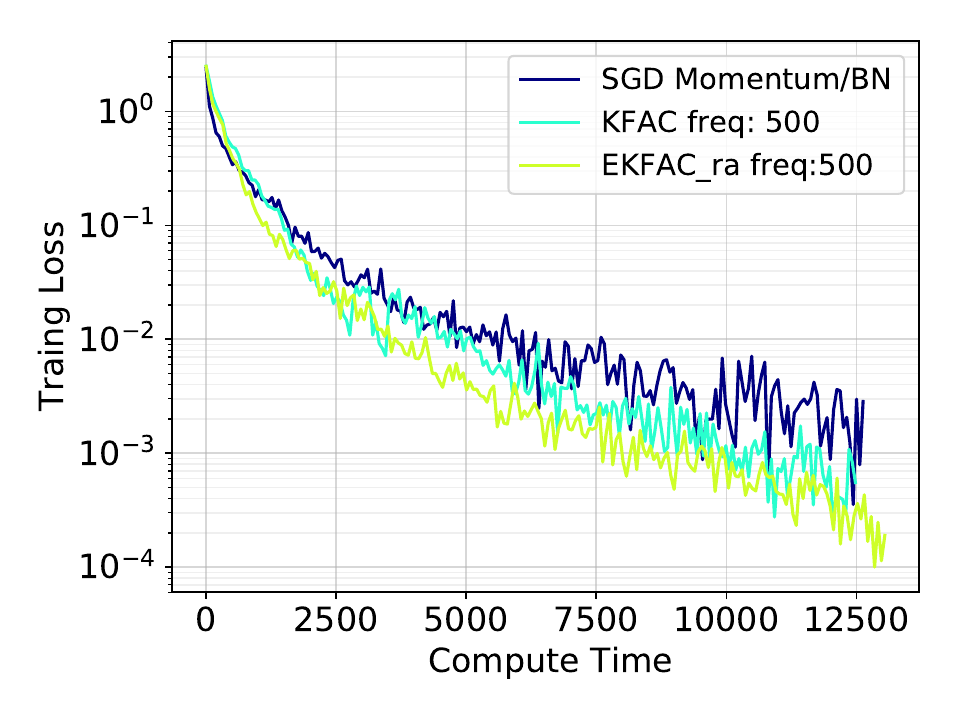}}
\subfloat[Accuracy  (solid is train, dash is validation)]{\includegraphics[width=0.25\textwidth]{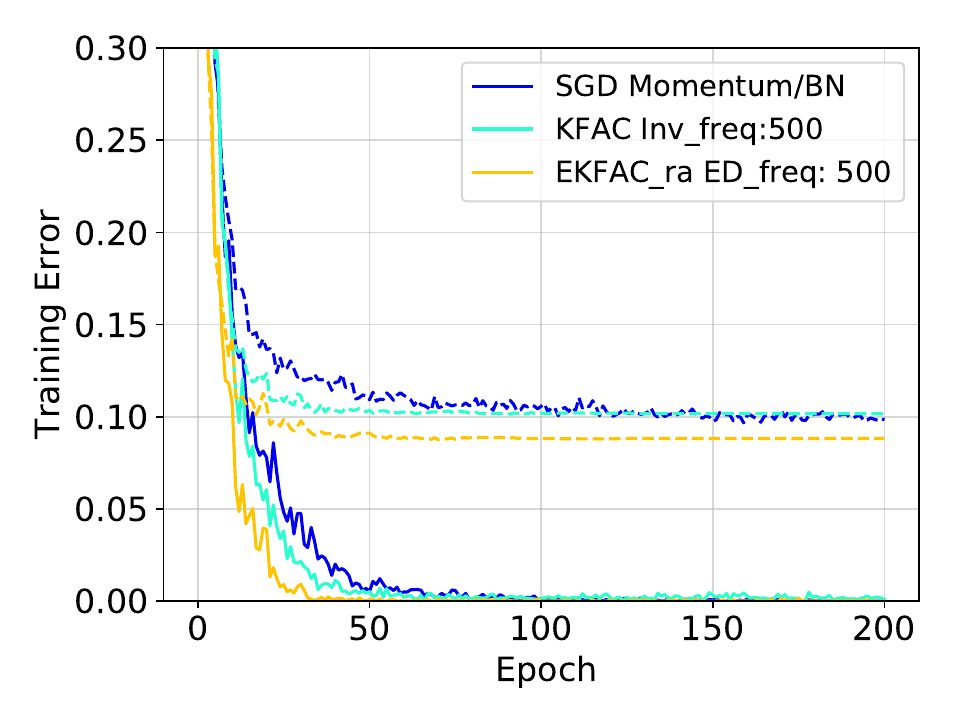}}

	\caption{Training a Resnet Network with 34 layers on CIFAR-10. "freq" corresponds to eigendecomposition (inverse) frequency. In \textbf{(a)} and \textbf{(b)}, we report the performance of the hyper-parameters reaching the lowest training loss for each epoch (to highlight which optimizers perform best given a fixed epoch budget). In \textbf{(c)} we select model according to the best overall validation error. When the inverse/eigen decomposition is amortized on 500 iterations, EKFAC-ra shows optimization and computational time benefits while maintaining a good generalization capability.}
	\label{fig:resnet34}
\end{figure*}

\section{Conclusion and future work}

In this work, we introduced the Eigenvalue-corrected Kronecker factorization (EKFAC), an approximate factorization of the (empirical) Fisher Information Matrix that is computationally manageable while still being accurate. We formally proved (in Appendix) that EKFAC yields a more accurate approximation than its closest parent and competitor KFAC, in the sense of the Frobenius norm. Of more practical importance, we showed that our algorithm allows to cheaply perform partial updates of the curvature estimate, by maintaining an up-to-date estimate of its eigenvalues while keeping the estimate of its eigenbasis fixed. This partial updating proves competitive when applied to optimizing deep networks, both with respect to the number of iterations and wall-clock time.

Our approach amounts to normalizing the gradient by its $2^\mathrm{nd}$ moment component-wise in a Kronecker-factored Eigenbasis (KFE). But one could apply other component-wise (diagonal) adaptive algorithms such as  Adagrad \citep{duchi2011adaptive}, RMSProp \citep{tieleman2012lecture} or Adam \citep{kingma2014adam}, \emph{in the KFE} where the diagonal approximation is much more accurate. This is left for future work. We also intend to explore alternative strategies for obtaining the approximate eigenbasis and investigate how to increase the robustness of the algorithm with respect to the damping hyperparameter. We also want to explore novel regularization strategies, so that the advantage of efficient optimization algorithms can more reliably be translated to generalization error.

\subsubsection*{Acknowledgments and other remarks}
The experiments were conducted using PyTorch (\cite{paszke2017automatic}). The authors would like to thank Facebook, CIFAR, Calcul Quebec and Compute Canada, for research funding and computational resources.

Concurrently to our work, \cite{liu2018rotate} also proposed to use the KFE in order to improve the elastic weight consolidation \citep{kirkpatrick2017overcoming} regularization strategy for continual learning.

\paragraph{Code} EKFAC can be experimented with in PyTorch using the NNGeometry library \citep{george_nngeometry} available at \url{https://github.com/tfjgeorge/nngeometry}.

\newpage

\appendix
\counterwithin{figure}{section}
\counterwithin{table}{section}

\section{Proofs}

\subsection{Proof that EKFAC does optimal diagonal rescaling in the KFE}
\label{app:proof-EKFAC}

\begin{lemma}
Let $G=\mathbb{E}\left[\nabla_{\theta}\nabla_{\theta}^{\top}\right]$ a real positive semi-definite matrix.  And let $Q$  a given orthogonal matrix. Among diagonal matrices, diagonal  matrix $D$ with diagonal entries $D_{ii}=\mathbb{\mathbb{E}}\left[\left(Q^{\top}\nabla_{\theta}\right)_{i}^{2}\right]$ minimize approximation error $e=\left\Vert G-QDQ^{\top}\right\Vert _{F}$ (measured as Frobenius norm).
\label{lemma:minimizer}
\end{lemma}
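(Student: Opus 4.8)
The plan is to exploit the fact that the Frobenius norm is invariant under conjugation by an orthogonal matrix, which collapses the problem to a trivial diagonal-matching argument. First I would introduce the rotated matrix $\tilde{G} = Q^{\top} G Q$ and observe that, because $Q$ is orthogonal, $Q^{\top}(QDQ^{\top})Q = D$ and left/right multiplication by $Q^{\top}$ and $Q$ preserves the Frobenius norm. Hence
\[
\left\Vert G - QDQ^{\top}\right\Vert_F = \left\Vert Q^{\top}\left(G - QDQ^{\top}\right)Q\right\Vert_F = \left\Vert \tilde{G} - D\right\Vert_F .
\]
This reduces the original minimization over $D$ to the much simpler problem of approximating the fixed matrix $\tilde{G}$ by a diagonal matrix $D$ in Frobenius norm.

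Next I would expand the squared objective entrywise. Since $D$ is diagonal, its off-diagonal entries are zero, so
\[
\left\Vert \tilde{G} - D\right\Vert_F^2 = \sum_{i \neq j}\tilde{G}_{ij}^2 + \sum_{i}\left(\tilde{G}_{ii} - D_{ii}\right)^2 .
\]
The first sum does not depend on $D$ at all, and the second is a sum of independent nonnegative terms, each minimized (to zero) by the choice $D_{ii} = \tilde{G}_{ii}$. This establishes both that this choice is optimal and, since the diagonal terms are strictly convex in each $D_{ii}$, that it is the unique minimizer among diagonal matrices.

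Finally I would identify the diagonal of $\tilde{G}$ with the claimed expression. Passing the deterministic matrices through the expectation gives
\[
\tilde{G} = Q^{\top}\,\mathbb{E}\!\left[\nabla_{\theta}\nabla_{\theta}^{\top}\right]Q = \mathbb{E}\!\left[\left(Q^{\top}\nabla_{\theta}\right)\left(Q^{\top}\nabla_{\theta}\right)^{\top}\right],
\]
whose $i$-th diagonal entry is precisely $\mathbb{E}\!\left[\left(Q^{\top}\nabla_{\theta}\right)_i^2\right]$, matching the stated $D_{ii}$. There is no genuine obstacle here: the only point requiring a little care is justifying the orthogonal invariance of the Frobenius norm and ensuring the expectation commutes with the fixed linear map $Q^{\top}(\cdot)Q$, after which the result is immediate. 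Applying the lemma with $Q = U_A \otimes U_B$ then yields the EKFAC optimality claim, and the corollary $\left\Vert G - G_{\mathrm{EKFAC}}\right\Vert_F \leq \left\Vert G - G_{\mathrm{KFAC}}\right\Vert_F$ follows because KFAC's $S_A \otimes S_B$ is merely one feasible diagonal choice.
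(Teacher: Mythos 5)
Your proof is correct and follows essentially the same route as the paper's: both exploit orthogonal invariance of the Frobenius norm to reduce the problem to diagonal matching against $Q^{\top}GQ$, split the squared error into $D$-independent off-diagonal terms plus diagonal terms zeroed by the choice $D_{ii}=(Q^{\top}GQ)_{ii}$, and then commute the expectation with $Q^{\top}(\cdot)Q$ to identify this diagonal with $\mathbb{E}\left[\left(Q^{\top}\nabla_{\theta}\right)_{i}^{2}\right]$. Your added remarks on uniqueness of the minimizer and the downstream corollaries are correct but not needed for the statement itself.
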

 
\begin{proof}

Since the Frobenius norm remains unchanged through multiplication
by an orthogonal matrix we can write 
\begin{align*}
e^2 &=\left\Vert G-QDQ^{\top}\right\Vert _{F}^2\\
       &= \left\Vert Q^{\top}\left(G-QDQ^{\top}\right)Q\right\Vert _{F}^{2} \\
        & =\left\Vert Q^{\top}GQ-D\right\Vert _{F}^{2} \\
        &=\underbrace{\sum_{i}(Q^{\top}GQ-D)_{ii}^{2}}_{\textrm{diagonal}}~+~\underbrace{\sum_{i}\sum_{j\ne i}(Q^{\top}GQ)_{ij}^{2}}_{\textrm{off-diagonal}}
\end{align*}
Since $D$ is diagonal, it does not affect the off-diagonal terms.

The squared diagonal terms all reach
their minimum value $0$ by setting $D_ {ii}=\left(Q^{\top}GQ\right)_{ii}$ for all $i$:  %
\begin{align*}
D_{ii}&=\left(Q^{\top}GQ\right)_{ii}\\&=\left(Q^{\top}\mathbb{\mathbb{E}}\left[\nabla_{\theta}\nabla_{\theta}^{\top}\right]Q\right)_{ii}\\&=\left(\mathbb{\mathbb{E}}\left[Q^{\top}\nabla_{\theta}\nabla_{\theta}^{\top}Q\right]\right)_{ii}\\&=\left(\mathbb{\mathbb{E}}\left[Q^{\top}\nabla_{\theta}\left(Q^{\top}\nabla_{\theta}\right)^{\top}\right]\right)_{ii}\\&=\mathbb{\mathbb{E}}\left[\left(Q^{\top}\nabla_{\theta}\right)_{i}^{2}\right]\text{ since \ensuremath{Q^{\top}\nabla_{\theta}} is a vector}
\end{align*}
We have thus shown that diagonal  matrix $D$ with diagonal entries $D_{ii}=\mathbb{\mathbb{E}}\left[\left(Q^{\top}\nabla_{\theta}\right)_{i}^{2}\right]$ minimize $e^2$.  Since Frobenius norm $e$ is non-negative this implies that $D$ also minimizes $e$.
\end{proof}

\begin{theorem}
Let $G=\mathbb{E}\left[\nabla_{\theta}\nabla_{\theta}^{\top}\right]$ the matrix we want to approximate. Let $G_\mathrm{KFAC}=A \otimes B$ the approximation of $G$ obtained by KFAC. Let $A=U_{A}S_{A}U_{A}^{\top}$ and $B=U_{B}S_{B}U_{B}^{\top}$  eigendecomposition of $A$ and $B$. The diagonal rescaling that EKFAC does in the Kronecker-factored Eigenbasis  (KFE) $U_{A}\otimes U_{B}$ is optimal in the sense that it minimizes the Frobenius norm of the approximation error: among diagonal matrices $D$, approximation error $e=\left\Vert G-(U_{A}\otimes U_{B})D(U_{A}\otimes U_{B})^{\top}\right\Vert _{F}$ is minimized by the matrix with with diagonal entries $D_{ii}=s^*_i = \mathbb{\mathbb{E}}\left[\left((U_{A}\otimes U_{B})^{\top}\nabla_{\theta}\right)_{i}^{2}\right]$. 
\label{thm:EKFAC-better}
\end{theorem}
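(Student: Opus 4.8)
The plan is to recognize this theorem as an immediate specialization of Lemma \ref{lemma:minimizer}, which already establishes optimality of the second-moment diagonal for an \emph{arbitrary} orthogonal matrix $Q$. The only gap to fill is to check that the KFE basis matrix $U_{A}\otimes U_{B}$ qualifies as such a $Q$; once that is confirmed, instantiating the lemma with $Q=U_{A}\otimes U_{B}$ reproduces the claimed minimizer $D_{ii}=\mathbb{E}[((U_{A}\otimes U_{B})^{\top}\nabla_{\theta})_{i}^{2}]=s^*_i$ verbatim.

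First I would record that $U_{A}$ and $U_{B}$ are orthogonal, being the eigenvector matrices of the real symmetric positive semi-definite matrices $A$ and $B$ (the spectral theorem guarantees an orthonormal eigenbasis for each). Next I would verify that their Kronecker product is orthogonal, using the transpose identity $(M\otimes N)^{\top}=M^{\top}\otimes N^{\top}$ together with the mixed-product property $(M\otimes N)(P\otimes R)=(MP)\otimes(NR)$:
\[
(U_{A}\otimes U_{B})^{\top}(U_{A}\otimes U_{B})=(U_{A}^{\top}U_{A})\otimes(U_{B}^{\top}U_{B})=I\otimes I=I.
\]
Since $U_{A}\otimes U_{B}$ is square, this identity suffices to establish orthogonality.

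With orthogonality in hand, I would invoke Lemma \ref{lemma:minimizer} with $Q=U_{A}\otimes U_{B}$. The lemma asserts that the diagonal $D$ minimizing $\left\Vert G-QDQ^{\top}\right\Vert_{F}$ has entries $D_{ii}=\mathbb{E}[(Q^{\top}\nabla_{\theta})_{i}^{2}]$, which is exactly $s^*_i$, completing the proof. I expect no genuine obstacle here: all the analytic work---rewriting the Frobenius error as a sum of diagonal and off-diagonal contributions and driving the diagonal terms to zero term-by-term---was already carried out in the lemma, so the only new ingredient is the elementary Kronecker-product algebra above. As a closing remark, the comparison claim $\left\Vert G-G_\mathrm{EKFAC}\right\Vert_{F}\le\left\Vert G-G_\mathrm{KFAC}\right\Vert_{F}$ then follows immediately, since KFAC's scaling $S_{A}\otimes S_{B}$ is itself merely one particular diagonal matrix expressed in this same basis, and hence can do no better than the optimum $S^*$.
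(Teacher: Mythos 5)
Your proposal is correct and follows exactly the same route as the paper's own proof: instantiate Lemma~\ref{lemma:minimizer} with $Q=U_{A}\otimes U_{B}$, justified by the fact that the Kronecker product of two orthogonal matrices is orthogonal. The only difference is cosmetic---you spell out the orthogonality verification via the mixed-product property, which the paper merely asserts, and you preview the Frobenius-norm comparison corollary that the paper states as a separate theorem.
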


\begin{proof}

This follows directly by setting $Q=U_{A}\otimes U_{B}$ in Lemma \ref{lemma:minimizer}. Note that the Kronecker product of two orthogonal matrices yields an orthogonal matrix.

\end{proof}

\begin{theorem}
Let $G_\mathrm{KFAC}$ the KFAC approximation of $G$ and $G_\mathrm{EKFAC}$ the EKFAC approximation of $G$, we always have
$\|G - G_\mathrm{EKFAC}\|_F  \leq \|G - G_\mathrm{KFAC}\|_F $.
\end{theorem}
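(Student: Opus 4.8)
The plan is to derive this statement as an immediate corollary of Theorem \ref{thm:EKFAC-better}. The crux is to recognize that $G_\mathrm{KFAC}$ itself lies inside the family of approximations over which EKFAC's diagonal $S^*$ was shown to be optimal.

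First I would re-express $G_\mathrm{KFAC}$ in the Kronecker-factored eigenbasis. Starting from the eigendecompositions $A = U_A S_A U_A^\top$ and $B = U_B S_B U_B^\top$ and applying the mixed-product property of the Kronecker product, I would write $G_\mathrm{KFAC} = A \otimes B = (U_A \otimes U_B)(S_A \otimes S_B)(U_A \otimes U_B)^\top$. The point worth checking here is that $S_A \otimes S_B$ is itself diagonal (a Kronecker product of diagonal matrices is diagonal), so that $G_\mathrm{KFAC}$ is genuinely of the form $(U_A \otimes U_B) D (U_A \otimes U_B)^\top$ for the particular diagonal choice $D = S_A \otimes S_B$.

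Next I would invoke Theorem \ref{thm:EKFAC-better}, which establishes that among all diagonal matrices $D$, the error $\|G - (U_A \otimes U_B) D (U_A \otimes U_B)^\top\|_F$ is minimized by $D = S^*$. Because $G_\mathrm{EKFAC}$ uses exactly this minimizer while $G_\mathrm{KFAC}$ corresponds to one specific admissible choice $D = S_A \otimes S_B$ in the same family, the minimal value attained by $S^*$ can only be less than or equal to the value attained by $S_A \otimes S_B$, which yields $\|G - G_\mathrm{EKFAC}\|_F \leq \|G - G_\mathrm{KFAC}\|_F$ directly.

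There is no substantive obstacle: the entire argument reduces to the observation that KFAC is a constrained special case of EKFAC (the diagonal being forced to factor as $S_A \otimes S_B$), and the inequality is then just the statement that a minimum is no larger than the value at any particular feasible point. The only step requiring a moment of care is confirming that $S_A \otimes S_B$ qualifies as a diagonal matrix, so that it is a legitimate competitor in the minimization of Lemma \ref{lemma:minimizer}.
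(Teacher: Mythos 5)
Your proposal is correct and follows essentially the same route as the paper's own proof: both treat $G_\mathrm{KFAC} = (U_A \otimes U_B)(S_A \otimes S_B)(U_A \otimes U_B)^\top$ as one feasible diagonal choice $D = S_A \otimes S_B$ in the minimization that Theorem \ref{thm:EKFAC-better} shows is solved by $S^*$, and conclude by optimality of the minimizer. The only (cosmetic) difference is direction of presentation: the paper starts from the optimality inequality and unpacks the right-hand side into $A \otimes B$ via the mixed-product property, whereas you first rewrite $G_\mathrm{KFAC}$ in the KFE and then invoke the theorem.
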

\begin{proof}
This follows trivially from Theorem \ref{thm:EKFAC-better} on the optimality of the EKFAC diagonal rescaling.

Since $D=S^*$, with the $S^*=\mathrm{diag}(s^*)$ of EKFAC, minimizes $\left\Vert G-(U_{A}\otimes U_{B})D(U_{A}\otimes U_{B})^{\top}\right\Vert _{F}$, it implies that:
\begin{eqnarray*}
\|G - (U_{A}\otimes U_{B})S^*(U_{A}\otimes U_{B})^{\top}\|_F
&\leq& \|G - (U_{A}\otimes U_{B})(S_{A}\otimes S_{B})(U_{A}\otimes U_{B})^{\top}\|_F \\
\|G - G_\mathrm{EKFAC} \|_F 
&\leq& \|G - (U_{A}S_{A}U_{A}^{\top}) \otimes (U_{B} S_{B}U_{B}^{\top}) \|_F \\
\|G - G_\mathrm{EKFAC} \|_F 
&\leq& \|G - A \otimes B) \|_F \\
\|G - G_\mathrm{EKFAC}\|_F 
&\leq& \|G - G_\mathrm{KFAC}\|_F
\end{eqnarray*}

\end{proof}
We have thus demonstrated that EKFAC yields a better approximation (more precisely: at least as good in Frobenius norm error) of
$G$ than KFAC.

\subsection{Proof that $S_{ii}=\mathbb{E}\left[\left(U^{\top}\nabla_{\theta}\right)_{i}^{2}\right]$}
\label{app:proof-SFii}

\begin{theorem}
Let $G=\mathbb{E}\left[\nabla_{\theta}\nabla_{\theta}^{\top}\right]$  and $G =U S U^{\top}$ its eigendecomposition.\\
Then $S_{ii}=\mathbb{E}\left[\left(U^{\top}\nabla_{\theta}\right)_{i}^{2}\right]$.
\end{theorem}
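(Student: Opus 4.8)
The plan is to exploit the orthogonality of $U$ to isolate $S$, and then to reuse exactly the expectation manipulation already carried out in Lemma~\ref{lemma:minimizer}. Since $U$ is orthogonal we have $U^\top U = U U^\top = I$, so left-multiplying the eigendecomposition $G = USU^\top$ by $U^\top$ and right-multiplying by $U$ gives $U^\top G U = S$. Because $S$ is diagonal, this immediately yields $S_{ii} = \left(U^\top G U\right)_{ii}$, which reduces the claim to evaluating a single diagonal entry.

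Next I would substitute the definition $G = \mathbb{E}[\nabla_\theta \nabla_\theta^\top]$ and move the constant matrices $U^\top$ and $U$ inside the expectation by linearity, obtaining $U^\top G U = \mathbb{E}[U^\top \nabla_\theta \nabla_\theta^\top U]$. I would then regroup the factors as $U^\top \nabla_\theta \nabla_\theta^\top U = (U^\top \nabla_\theta)(U^\top \nabla_\theta)^\top$, so that the integrand is the outer product of the vector $U^\top \nabla_\theta$ with itself. Reading off the $i$-th diagonal entry of such an outer product gives the square of its $i$-th coordinate, hence $S_{ii} = \mathbb{E}[(U^\top \nabla_\theta)_i^2]$, as claimed.

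There is essentially no obstacle here: the statement is the degenerate case of Lemma~\ref{lemma:minimizer} in which the chosen orthogonal basis $Q$ is taken to be the true eigenbasis $U$ of $G$ itself. In that situation the off-diagonal residual $\sum_i \sum_{j \ne i} (U^\top G U)_{ij}^2$ vanishes because $U^\top G U = S$ is already diagonal, the minimal approximation error is zero, and the optimal diagonal entries furnished by the lemma, namely $D_{ii} = \mathbb{E}[(U^\top \nabla_\theta)_i^2]$, must coincide with the diagonal of $S$. One could therefore present the result either as the short self-contained computation above or as a one-line corollary of Lemma~\ref{lemma:minimizer}. I would give the direct computation for clarity and mention the corollary view as a sanity check, since it also confirms that the eigenbasis is precisely the basis in which the optimal diagonal rescaling reconstructs $G$ exactly.
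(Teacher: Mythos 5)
Your proof is correct and follows essentially the same route as the paper's: conjugate the eigendecomposition by $U$ to get $U^{\top}GU=S$, push $U^{\top}$ and $U$ inside the expectation, regroup the integrand as the outer product $(U^{\top}\nabla_{\theta})(U^{\top}\nabla_{\theta})^{\top}$, and read off the diagonal entries as squared coordinates. Your closing observation that the statement is also the degenerate case $Q=U$ of Lemma~\ref{lemma:minimizer} (where the off-diagonal residual vanishes) is a nice extra consistency check, but the core argument is identical to the paper's.
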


\begin{proof}

Starting from eigendecomposition $G=USU^\top$ and  the fact that $U$ is orthogonal so that $U^\top U = I$ we can write
\begin{eqnarray*}
G&=&USU^\top \\
U^\top G U&=&U^\top U S U^\top U \\
U^\top \underbrace{G}_{\mathbb{E}\left[\nabla_{\theta}\nabla_{\theta}^{\top}\right] } U&=& S 
\end{eqnarray*}
so that
\begin{eqnarray*}
S_{ii}&=&\left(U^{\top}\mathbb{E}\left[\nabla_{\theta}\nabla_{\theta}^{\top}\right]U\right)_{ii}\\
&=&\left(\mathbb{E}\left[U^{\top}\nabla_{\theta}\nabla_{\theta}^{\top}U\right]\right)_{ii}\\
&=&\left(\mathbb{E}\left[U^{\top}\nabla_{\theta}\left(U^{\top}\nabla_{\theta}\right)\right]\right)_{ii}\\
&=&\mathbb{E}\left[\left(U^{\top}\nabla_{\theta}\left(U^{\top}\nabla_{\theta}\right)\right)_{ii}\right]\\
&=&\mathbb{E}\left[\left(U^{\top}\nabla_{\theta}\right)_{i}^{2}\right]
\end{eqnarray*}
where we obtained the last equality by observing that $U^{\top}\nabla_{\theta}$ is a vector and that the diagonal entries of the matrix $a a^\top $ for any vector $a$ are given by $a^2$ where the square operation is element-wise.
\end{proof}

\section{Residual network initialization}
\label{sec:init}

To train residual networks without using BN, one need to initialize them carefully, so we used the following procedure, denoting $n$ the fan-in of the layer:
\begin{enumerate}
	\item We use the He initialization for each layer directly preceded by a ReLU \citep{he2015delving}: $W \sim \mathcal N(0, 2/n)$, $b = 0$.
	\item Each layer not directly preceded by an activation function (for example the convolution in a skip connection) is initialized as: $W \sim \mathcal N(0, 1/n)$, $b = 0$. This can be derived from the He initialization, using the Identity as activation function.	
	\item Inspired from \cite{goyal2017accurate}, we divide the scale of the last convolution in each residual block by a factor 10: $W \sim \mathcal N(0, 0.2/n)$, $b = 0$. This not only helps preserving the variance through the network but also eases the optimization at the beginning of the training.
\end{enumerate}

\section{Initialization of $\epsilon$ }
\label{sec:eps_init}

Both KFAC and EKFAC are very sensitive to the the damping parameter $\epsilon$. When EKFAC is applied to deep  neural networks, we empirically observe that adding $\epsilon$ to the eigenvalues of both the activation covariance and gradient covariance matrices  lead to better optimization than using a fixed $\epsilon$ added directly on the eigenvalues in KFE space. KFAC uses also a similar initialization for $\epsilon$. By setting 
$\mathrm{diag}(\epsilon') = \epsilon I + S_A \otimes \sqrt{\epsilon} I_{d_\mathrm{out}} + \sqrt{\epsilon} I_{d_\mathrm{in}} \otimes S_B$
, we can implement this initialization strategy directly in the KFE.

\section{Additional empirical results}

\subsection{Impact of batch size}
\label{sec:bs}

In this section, we evaluate the impact of the batch size on the optimization performances for KFAC and EKFAC. In Figure~\ref{fig:vgg11_bs}, we reports the training loss performance per epochs for different batch sizes for VGG11. We observe that the optimization gain of EKFAC with respect of KFAC diminishes as the batch size gets smaller.

\begin{figure*}[h!]
	\centering
	\subfloat[Batch size = 200]{\includegraphics[width=0.25\textwidth]{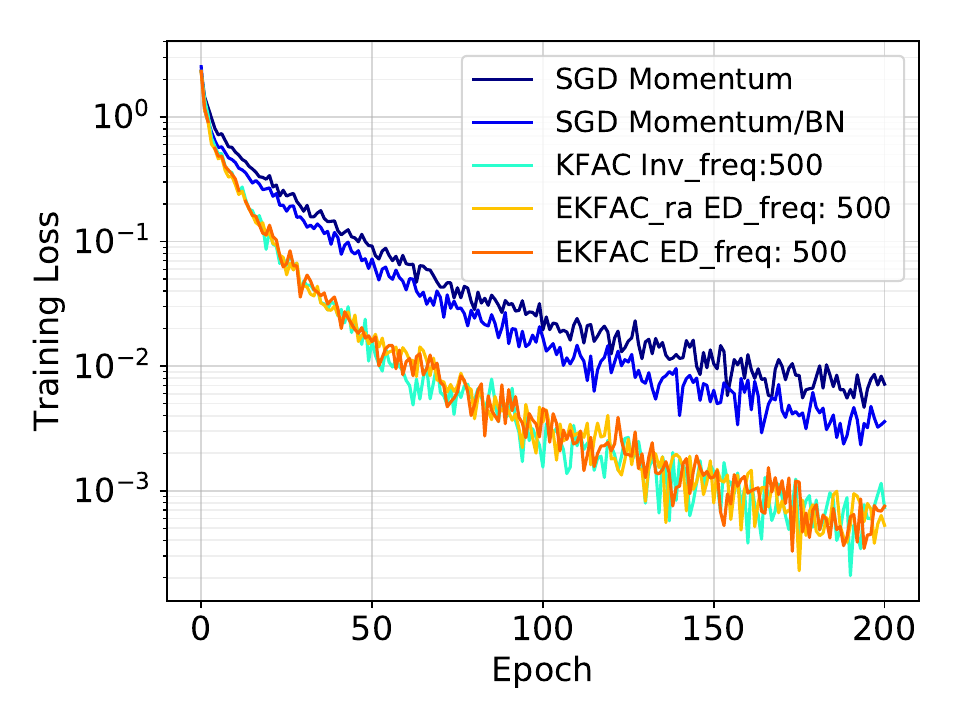}}
   	\subfloat[Batch size = 500]{\includegraphics[width=0.25\textwidth]{figures/cifar10_vgg11_bs_500_freq_500.pdf}}
   	\subfloat[Batch size = 1000]{\includegraphics[width=0.25\textwidth]{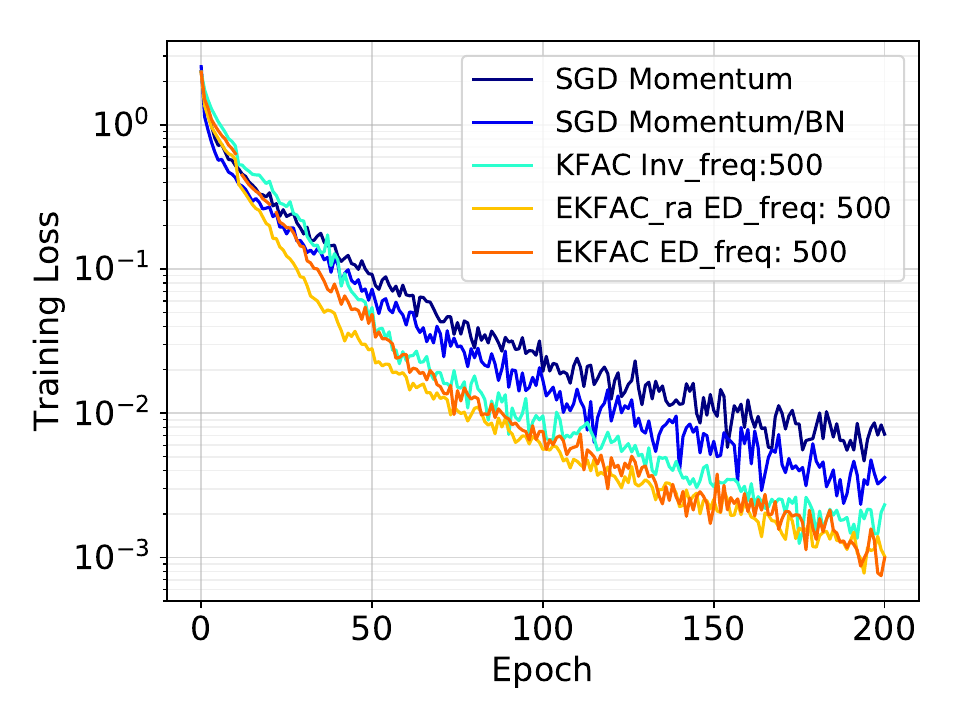}}
	\caption{VGG11 on CIFAR-10. ED\_freq (Inv\_freq) corresponds to eigendecomposition (inverse) frequency. We perform model selections according to best training loss at each epoch.
On this setting, we observe that the optimization gain of EKFAC with respect of KFAC diminishes as the batch size reduces. }
	\label{fig:vgg11_bs}
\end{figure*}

In Figure~\ref{fig:vgg11_bs_comp}, we look at the training loss per iterations and the training loss per computation time for different batch sizes, again on VGG11. EKFAC shows optimization benefits over KFAC as we increase the batch size (thus reducing the number of inverse/eigendecomposition per epoch). This gain does not translate in faster training in term of computation time in that setting. VGG11 is a relatively small network by modern standard and the SUA approximation remains computationally bearable on this model. We hypothesize that using smaller batches, KFAC can be updated often enough per epoch to have a reasonable estimation error while not paying a computational price too high.

\begin{figure*}[h!]
	\centering
	\subfloat[Iterations]{\includegraphics[width=0.25\textwidth]{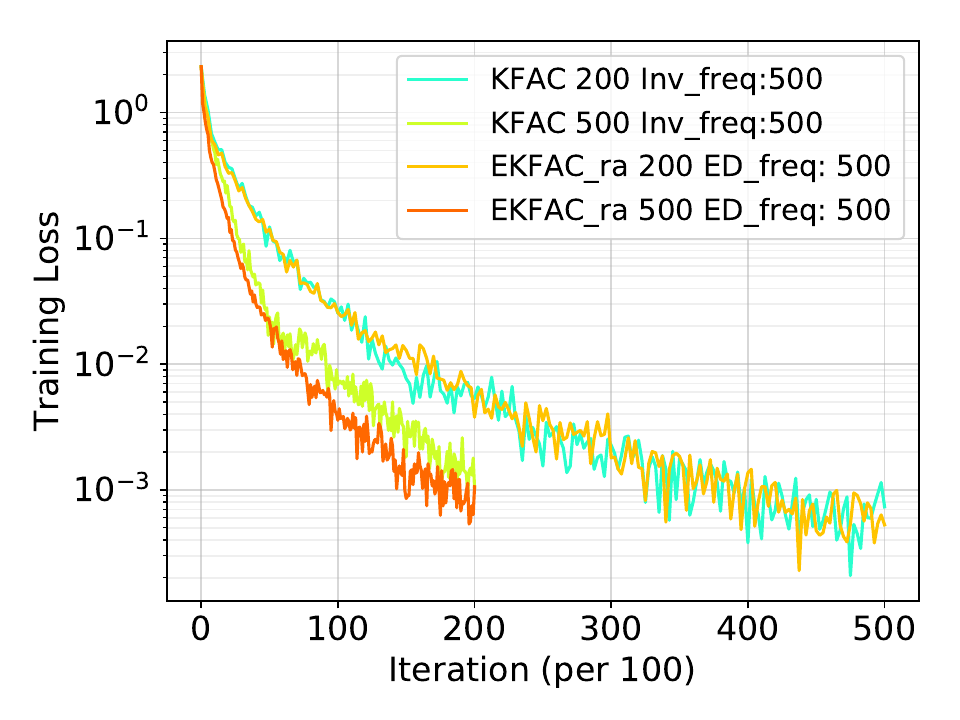}}
   	\subfloat[Wall-clock time]{\includegraphics[width=0.25\textwidth]{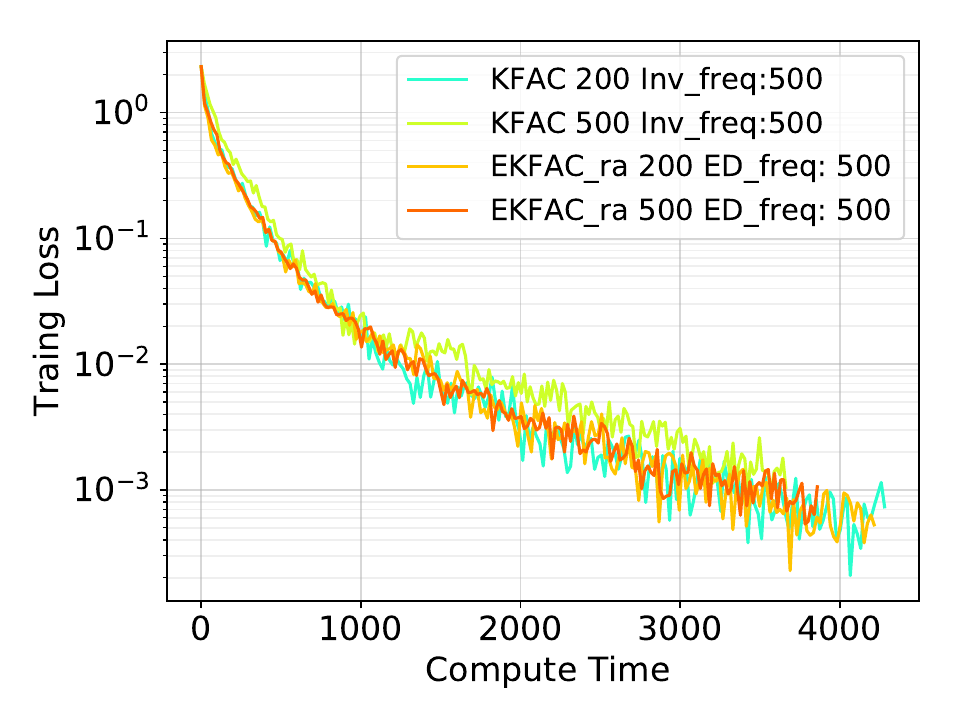}}
	\caption{VGG11 on CIFAR-10. ED\_freq (Inv\_freq) corresponds to eigendecomposition (inverse) frequency. We perform model selections according to best training loss at each epoch.
\textbf{(a)} Training loss per iterations for different batch sizes. \textbf{(b)} Training loss per computation time for different batch sizes. EKFAC shows optimization benefits over KFAC as we increases the batch size (thus reducing the number of inverse/eigendecomposition per epoch). This gain does not translate in faster training in terms of wall-clock time in that setting.}
	\label{fig:vgg11_bs_comp}
\end{figure*}

In Figure~\ref{fig:resnet_bs}, we perform a similar experiment on the Resnet34. In this setting, we observe that the optimization gain of EKFAC with respect of KFAC remains consistent across batch sizes.

\begin{figure*}[h!]
	\centering
	\subfloat[Batch size = 200]{\includegraphics[width=0.25\textwidth]{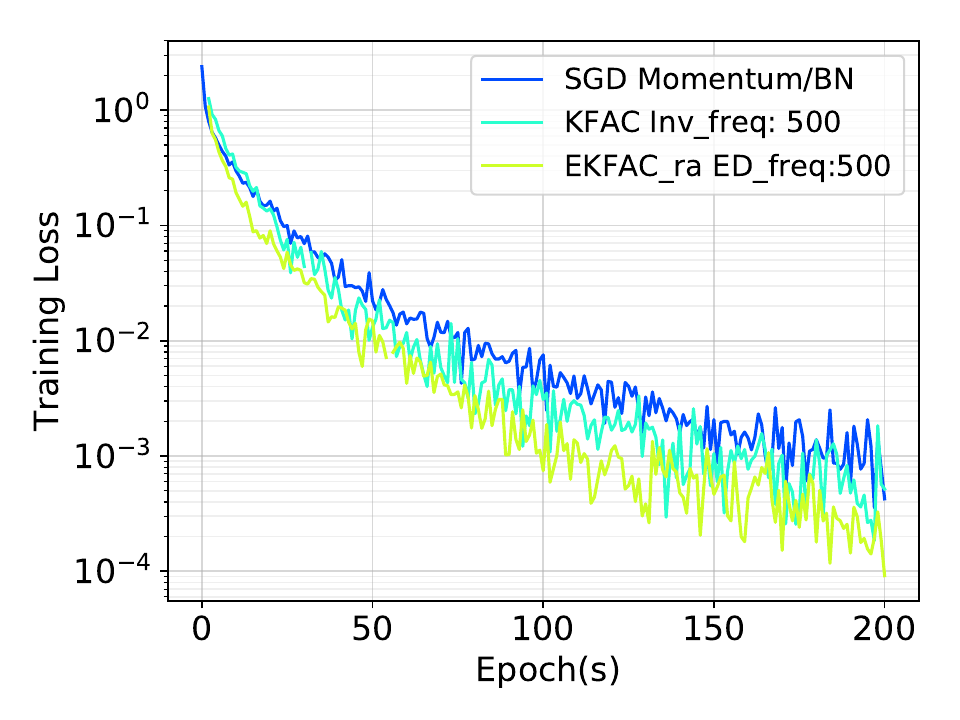}}
   	\subfloat[Batch size = 500]{\includegraphics[width=0.25\textwidth]{figures/cifar10_resnet_cst_bs_500_freq_500.pdf}}
	\caption{Resnet34 on CIFAR-10. ED\_freq (Inv\_freq) corresponds to eigendecomposition (inverse) frequency. We perform model selections according to best training loss at each epoch.
In this setting, we observe that the optimization gain of EKFAC with respect of KFAC remains consistent across batch sizes.}
	\label{fig:resnet_bs}
\end{figure*}

\subsection{Learning rate schedule}
\label{sec:lrs}

In this section we investigate the impact of a learning rate schedule on the optimization of EKFAC. We use a similar setting than the CIFAR-10 experiment, except that we decay the learning by 2 every 20 epochs. Figure~\ref{fig:vgg11_decay} and \ref{fig:resnet34_lr_decay} show that EKFAC still highlight some optimization benefit, relatively to the baseline, when combined with a learning rate schedule.

\begin{figure*}[h!]
	\centering
	\subfloat[Training loss]{\includegraphics[width=0.25\textwidth]{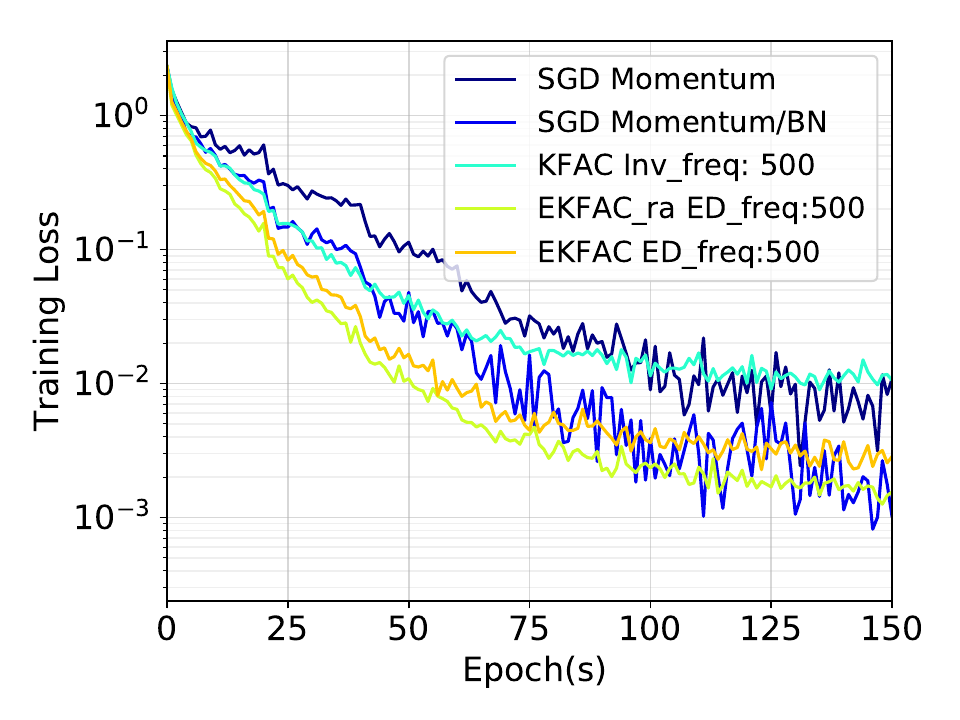}}
   	\subfloat[Wall-clock time]{\includegraphics[width=0.25\textwidth]{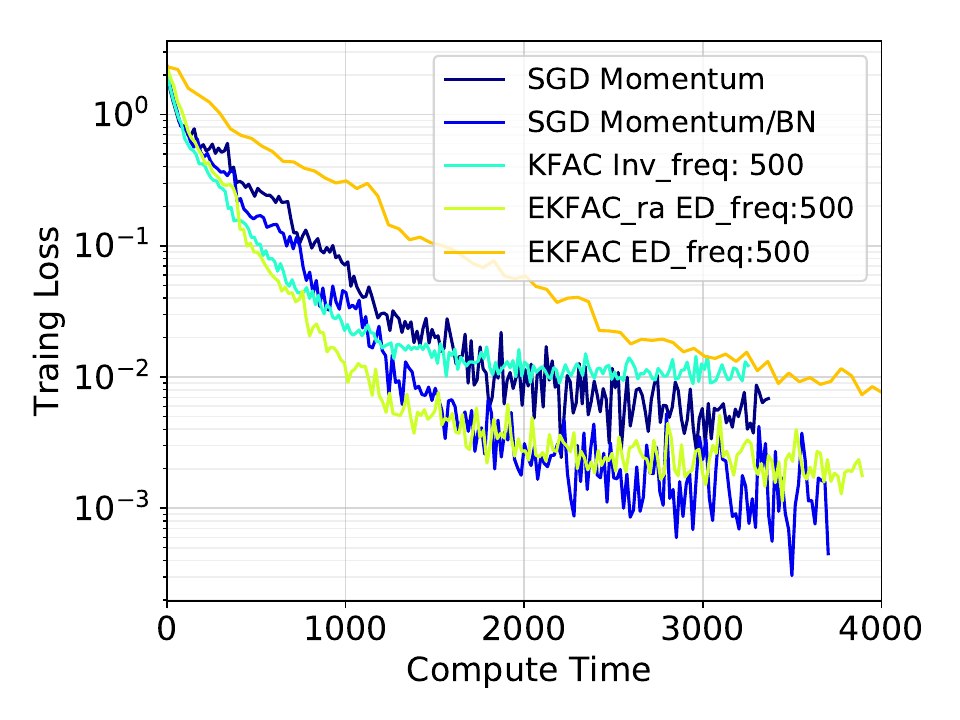}}

	\caption{VGG11 on CIFAR10 using a learning rate schedule. ED\_freq (Inv\_freq) corresponds to eigendecomposition (inverse) frequency. In \textbf{(a)} and \textbf{(b)}, we report the performance of the hyper-parameters reaching the lowest training loss for each epoch (to highlight which optimizers perform best given a fixed epoch budget).}
	\label{fig:vgg11_decay}
\end{figure*}

\begin{figure*}[h!]
	\centering
	\subfloat[Training loss]{\includegraphics[width=0.25\textwidth]{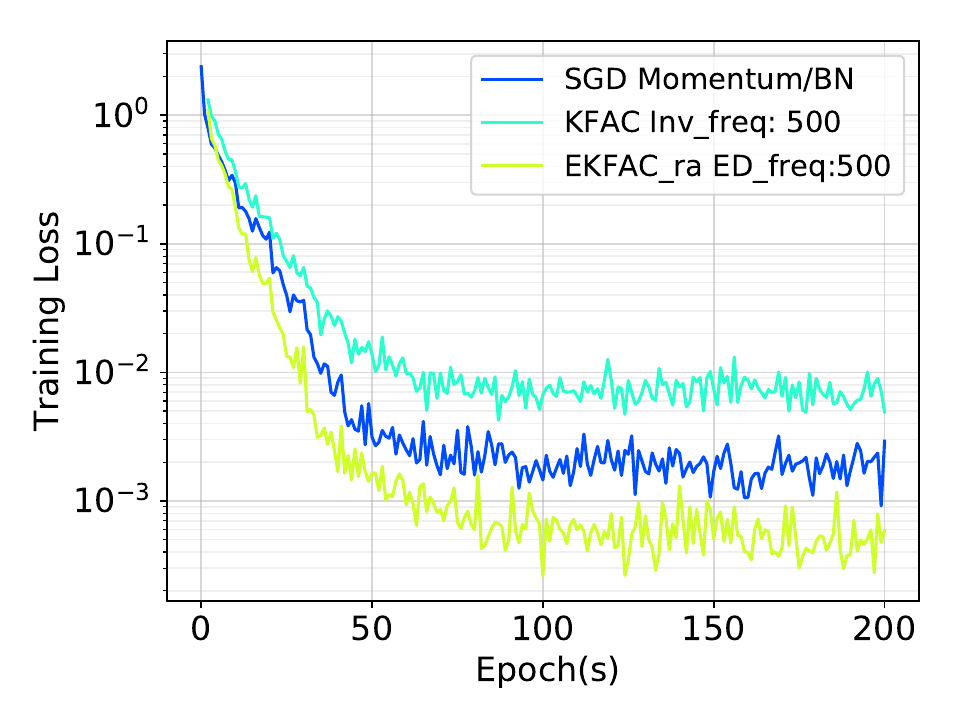}}
   	\subfloat[Wall-clock time]{\includegraphics[width=0.25\textwidth]{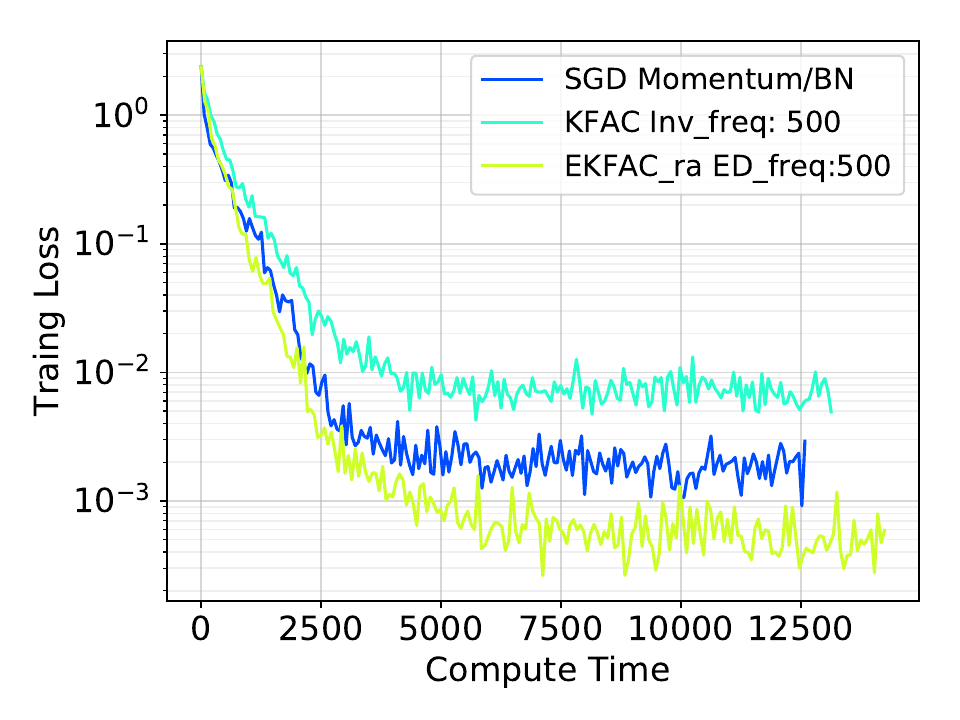}}

	\caption{Resnet34 on CIFAR10 using a learning rate schedule. ED\_freq (Inv\_freq) corresponds to eigendecomposition (inverse) frequency. In \textbf{(a)} and \textbf{(b)}, we report the performance of the hyper-parameters reaching the lowest training loss for each epoch (to highlight which optimizers perform best given a fixed epoch budget).}
	\label{fig:resnet34_lr_decay}
\end{figure*}

\end{document}